\theoremstyle{plain}
\newtheorem{theorem}{Theorem}[section]
\newtheorem{proposition}[theorem]{Proposition}
\newtheorem{lemma}[theorem]{Lemma}
\newtheorem{remark}[theorem]{Remark}
\theoremstyle{definition}
\newtheorem{definition}[theorem]{Definition}
\title{Understanding Contrastive Learning via Gaussian Mixture Models}
\author{%
  Parikshit Bansal \\
  UT Austin \\
  \texttt{pbansal@utexas.edu}\\
  \And
  Ali Kavis \\
  UT Austin \\
  \texttt{kavis@austin.utexas.edu}\\
  \And
  Sujay Sanghavi \\
  UT Austin \\
  \texttt{sanghavi@mail.utexas.edu}\\
}
\begin{document}

\maketitle

\begin{abstract}
Contrastive learning involves learning representations via a loss function that encourages each (unlabeled) sample to be far from other samples, but close to its own \emph{augmentation}. 
In this paper, we aim to understand  why this simple idea performs remarkably well, by theoretically analyzing it for a simple, 
natural problem setting: dimensionality reduction in Gaussian Mixture Models (GMMs). 
Note that the standard GMM setup lacks the concept of augmentations. We study an intuitive extension: we define the pair of data sample and its augmentation as a coupled random draw from the GMM such that the marginal over the "noisy" augmentation is \emph{biased} towards the component of the data sample.
For this setup, we show that vanilla contrastive loss, e.g., InfoNCE, is able to find the \emph{optimal} lower-dimensional subspace even when the Gaussian components are non-isotropic.
In particular, we show that InfoNCE can match the performance of a fully supervised algorithm, e.g., LDA, (where each data point is labeled with the mixture component it comes from) even when the augmentations are "noisy".
We further extend our setup to the multi-modal case, and develop a GMM-like setting to study the contrastive CLIP loss. 
We corroborate our theory with experiments on CIFAR100; representations learned by InfoNCE loss match the performance of LDA on clustering metrics.

\end{abstract}




\section{Introduction}
Contrastive learning (CL) is now a gold-standard paradigm for learning representations with popular examples such as vision models like SimCLR~\cite{pmlr-v119-chen20j} and MoCo~\cite{chen2020improved} , text models like Dense Passage Retrieval (DPR)~\cite{karpukhin2020dense}, and vision-language models like CLIP ~\cite{radford2021learning}. Once learned, these representations both demonstrate commendable zero-shot performance, and could easily be adapted to achieve SoTA performance on various tasks like classification~\cite{pmlr-v119-chen20j} and retrieval~\cite{izacard2021unsupervised}.

Contrastive learning is a self-supervised strategy that aims to learn coherent representations given an unlabeled dataset. 
The core idea is to relate \emph{similar} (positive) data samples to each other while contrasting \emph{unrelated} (negative) points. 
To formalize, consider a pair of points $(\vx, \hat{\vx})$, where $\vx$ is a sample from the dataset and $\hat{\vx}$ is a \enquote{similar}, positive example. Contrastive methods learn representations so that the embedding of the sample $\vx$ is \emph{close} to that of its partner $\hat{\vx}$, while being \emph{far away} from other points. Sometimes, these pairs are naturally present. For instance, in CLIP, each training data comes as a pair of an image and the corresponding text caption. In other scenarios, they can be manually obtained 
via data augmentation that preserves the underlying semantics, e.g., we can define an image and its resized/cropped/rotated version as a pair. 

It is fundamentally important to understand {\bf why this simple idea of relating/contrasting \emph{pairs} works so well}. In this paper, we aim to answer this question with a theoretical study of contrastive learning by casting is as dimensionality reduction for Gaussian Mixture Models (GMMs). 
We motivate our study through the lens of representation learning which 
aims at finding a 
mapping between the high-dimensional input space, e.g., images of 1 million pixels, and a lower dimensional output domain of \emph{representations}, e.g., 1024-dimensional feature vectors. Similarly, contrastive learning aims at discovering robust mappings between the data and the representations by exploiting additional information in the form of augmentations. In our framework of GMMs, we consider \emph{linear} mappings/projections (for analytical tractability) and characterize the \emph{optimality} of the mapping functions learned by contrastive methods by analyzing the associated projection subspaces. 


To highlight the success of contrastive methods in learning linear projectors, we conduct our study in comparison to fully supervised methods and standard spectral methods, i.e., based on singular value decomposition (SVD). 
We provide an example where we provide a three-way comparison in finding anticipated projection subspaces (see \cref{subsec:parallel}). 
Note that SVD-based methods are not designed to leverage augmentations. 
Therefore, we use SVD-based methods merely as a reference point to argue that \emph{when (noisy) augmentations are present}, contrastive learning can \emph{provably} go beyond what would otherwise be possible. 
However, a more relevant and interesting comparison is with respect to supervised methods, i.e., LDA in this case. 
In that sense, we provide principled, theoretical insight into how contrastive learning works and how it performs with respect to a measure of optimality.

Within our GMM framework, a key proposal of ours is formalizing the \enquote{notion of augmentations}, i.e. point-pairs, going beyond the idealized definition considered in \citep{bizeul2024probabilistic}. In the context of GMMs, a natural choice of augmentation would be drawing a random sample \emph{from the same component} as the original point belongs to. Instead, we assume that for every point $\vx$, its augmentation $\hat{\vx}$ is a random draw \emph{from the GMM} following a distribution (see \cref{def:jointdistribution}) that is \emph{biased towards the component of the original point $\vx$}. 
To keep the setting realistic, we are oblivious to which GMM component either of them comes from.

Given the model of noisy augmentations, we analyze contrastive learning objectives in two categories; single and multi-modal GMMs. In the single modal setting, the points in a pair are drawn from the same GMM distribution. We analyze InfoNCE and SimSiam losses, which are suited to leverage the additional information provided by the augmentations. 
In the {\bf multi-modal} context, the pair $(\vx,\hat{\vx})$ will be generated by \emph{separate} GMMs. For instance, in the CLIP~\citep{} images are associated with text descriptions such that we have pairs of the form $(\vx_V,\vx_T)$, where $\vx_V$ (images) and $\vx_T$ (texts) can have different dimensions. 
The objective now is to learn two different projections, one for each modality to project them onto the \emph{same} space. 
Contrastive methods aim to make the embedding of $\vx_V$ close to that of its pair $\vx_T$, and far from random other $\tilde{\vx}_T$. 
Since the samples come in natural pairs, we do not need augmentations. 
We provide further details in \cref{sec:multimodal}.

{\bf Contributions:} Our objective is to theoretically understand the vital role of augmentations in self-supervised representation learning; we do so in the context of GMMs. To summarize,
\begin{enumerate}
    \item We formalize a generalized notion of \textbf{noisy augmentation/point pairs}, enabling us to concretely characterize contrastive learning methods in the context of GMMs. 
    \item We quantify a measure of optimality for the projections learned in the context of GMMs. We show that InfoNCE can find optimal projections for GMMs with \emph{general shared covariances}.
    \item For the multi-modal setting, we show that the CLIP loss learns linear projections that are a subset of the (Fisher-)optimal subspaces for each modality, filtering out noise directions.
\end{enumerate}

\subsection{Related work} \label{sec:related_work}

\textbf{Contrastive learning and InfoNCE.} 
Contrastive learning is at the heart of state-of-the-art representation learning methods like SimCLR~\cite{pmlr-v119-chen20j}, MoCo~\cite{chen2020improved} etc.
Prior theoretical work on contrastive learning argue that the contrastive objective leads to learning of "general purpose" representations that facilitates better downstream performance and sample complexity. 
\cite{arora2019theoretical} gives the first theoretical analyses of the contrastive loss, showing provable guarantees on downstream task with reduced sample complexity. 
\cite{haochen2021provable} relaxes the augmentation assumptions made in \cite{arora2019theoretical} by showing that contrastive learning is equivalent to spectral clustering on some appropriately defined data graph. 
\cite{saunshi2022understanding,haochen2022theoretical} extend this line of work by arguing for the inclusion of the inductive biases of the neural network architecture into the theoretical analyses.  
\cite{johnson2022contrastive,zhai2023understanding} extend the setup of \cite{haochen2021provable} to show that the eigenfunctions of the positive pair graph (as defined in \cite{haochen2021provable}) are the basis of the set of view-invariant functions. They further show that these representations are equivalent to Kernel PCA with “positive-pair kernel”. \cite{parulekar2023infonce} show that under the condition that the augmentation sets within clusters are close and intertwined (relative to the complexity of the learning function class), the InfoNCE loss is able to learn consistent representations. 

\textbf{Self-supervised learning.} 
BYOL~\cite{grill2020bootstrap}, SwAV~\cite{caron2020unsupervised}, SimSiam~\cite{chen2021exploring} are some of the most popular methods for self-supervised representation learning. They are also based on Siamese architecture used in contrastive methods, but notably, they do not make use of \emph{negatives} to prevent ``collapsing'' solutions. 
Contrastive methods like SimCLR~\cite{pmlr-v119-chen20j} and CLIP~\cite{radford2021learning} aim to learn representations so that embedding of a point is close to its partner while far away from that of any other point. On the contrary, \emph{non-contrastive} self-supervised learning methods only enforce the former condition and do not explicitly deal with the separation between the embeddings of samples. These methods are empirically motivated by the notion of breaking the symmetry between the encoders in Siamese networks~\cite{wang2022importance}. \cite{tian2021understanding,jing2021understanding,wen2022mechanism} propose various techniques for breaking this symmetry and study their training dynamics to understand their role in preventing collapse. Compared to the relevant work on self-supervised learning, we instead focus on the fixed point analysis of the loss functions.

\textbf{Dimensionality reduction for GMMs.} 
Spectral methods are popularly used for dimensionality reduction in GMMs by analyzing the principal directions of the data covariance. Spectral clustering algorithms~\cite{arora01,vempala2004spectral,achlioptas2005spectral,kannan2009spectral,brubaker2008isotropic} essentially combine spectral analysis with standard clustering algorithms (e.g. K-Means) in the low dimensional space.
Since their error bounds grow as square root of the ambient dimension, the two-step approach leads to lower clustering error than in the original space. 
\section{Problem setup}\label{sec:problem_statement}
We consider the task of learning a \enquote{good} representation function for points $\vx$ drawn from a data distribution. Informally, such a function should map the data onto a lower-dimensional subspace while preserving the class information. For tractability, we restrict $f$ to the class of linear mappings; 
$\vx \mapsto \vA^T\vx$,
where $\vA \in \mathbb{R}^{d\times r}$. In this setup, we will show that self-supervised methods can learn the optimal mapping for this task. 
We compare and contrast our results for self-supervised methods to well-known results for supervised methods (e.g., LDA) and spectral methods. 

As the basis of our study, we consider data distributions following a Gaussian mixture model (GMM). Intuitively, each mixture component in GMM represents a class in data and we assume that components have a shared covariance.
\begin{definition}
\label{def:shared_gmm}
A \textit{Shared Covariance Gaussian Mixture Model} (SharedGMM) parameterized by $\{w_k,\vmu_k,\vSigma\}_{k\in[K]}$ is defined as the probability distribution 
$F = \sum_{k\in[K]} w_k \mathcal{N}(\vmu_k,\vSigma)$
where $\sum_k w_k = 1$, $\vmu_k$ are the means and $\vSigma \in \mathbb{S}_{++}^d$ is the covariance matrix shared by the components.
\end{definition}
To complete our data model, we formalize the definition of \emph{augmentation}. Given a point $\vx$, its augmentation $\hat \vx$ is an independent sample from the mixture distribution with a \textit{bias} towards the underlying component of $\vx$. The bias implies that if $\vx$ is sampled from a component $z$, then its augmentation, on average, is more likely to be sampled from the same component $z$.
\begin{definition}
\label{def:jointdistribution}
    For a SharedGMM $F$ parameterized by $\{w_k,\vmu_k,\vSigma\}_{k\in[K]}$, we define its \emph{\jointdistribution} (AeD) $\hat{F}$, parameterized by $\{w_k,\vmu_k,\vSigma, \delta\}_{k\in[K]}$ as 
    \begin{align*}
        \hat{F} = \delta \sum\nolimits_k w_k \big(\mathcal{N}(\vmu_k,\vSigma) \times \mathcal{N}(\vmu_k,\vSigma)\big) + (1-\delta) \big(\sum\nolimits_k w_k \mathcal{N}(\vmu_k,\vSigma) \times \sum\nolimits_{k'} w_{k'} \mathcal{N}(\vmu_{k'},\vSigma)\big)
    \end{align*}
\end{definition}
To elucidate, augmentation-enabled distribution (AeD) returns a pair of points $(\vx,\aug{\vx}) \sim \hat{F}$ in a two-step process. We first flip a coin with bias $\delta$. Based on the coin flip, we either sample twice from the same component or sample from possibly different components (chosen with probability $w_k$). The bias of the coin $\delta$ controls the correlation between $\vx$ and $\hat{\vx}$: $\delta = 0$ means $\vx$ and $\hat{\vx}$ are independent samples, while $\delta = 1$ means $\vx$ and $\hat{\vx}$ are independent draws from the same component. 
The definition ensures that the marginal of $\vx$ and $\hat{\vx}$ are equal to $F$ for any $\delta$.

\textbf{Spectral versus self-supervised methods.} Our results corroborate the known advantages of self-supervised methods over spectral methods. 
Clearly, spectral methods learn linear mappings directly on the data samples $\vx \sim \mathcal D$, whereas, 
self-supervised methods operate on the augmented pairs $(\vx,\hat{\vx})\sim \mathcal{D}_{\mathrm{pair}}$. 
Consequently, their optimization problems differ in their target objectives:
\begin{equation*}
    \begin{aligned}
    \mathcal{L}_{\mathrm{spectral}}(\vA;\mathcal{D}) = {-||\vA^T\vx||^2} \qquad\&\qquad 
    \mathcal{L}_{\mathrm{selfsup}}(\vA;\mathcal{D}_{\mathrm{pair}}) = -\underbrace{(\vA^T\vx)^T\vA^T\aug{\vx}}_{\mathrm{attractive\ term}}
    +  \underbrace{\mathcal{R}(\vx)}_{\mathrm{regularizer}}
    \end{aligned}
\end{equation*}
While spectral objectives find the \emph{best-fit} subspace explaining maximum data variance, self-supervised objectives are based on the principle of bringing embeddings of similar point closer in conjunction with a loss-specific regularizer. The regularizer typically induces separation between distinct points.

\textbf{Linear dimensionality reduction (LDR).} Framing our task as LDR allows us to establish a measure of comparison between mappings learned by spectral ($\vA_{\mathrm{spectral}}$) and self-supervised methods ($\vA_{\mathrm{selfsup}}$).
For a SharedGMM, we denote it's "intra-component" variance (i.e. variance within a component) by $\vSigma$ and "inter-component" variance (i.e. the separation between components) by $\sum_k w_k \vmu_k \vmu^T_k$ \cite{fukunaga2013introduction}.
A GMM is said to be \textit{well-separated} if it has \emph{low intra-component} variance and \emph{high inter-component} variance.
LDR aims to map a GMM to a low dimensional subspace where it is well-separated. We evaluate the performance of these mappings by Fisher discriminant~\cite{fukunaga2013introduction}.  
\begin{definition}[Fisher Discriminant]
\label{def:fisher_discriminant}    
Suppose a SharedGMM (Def~\ref{def:shared_gmm}) parameterized by $\{w_k,\vmu_k,\vSigma\}_{k\in[K]}$. 
Then, the \textit{fisher discriminant} $J(\vA)$ for a mapping $\vA$ 
is defined as:  
\begin{align*}
    J(\vA)= \mathrm{Tr}\left( \left[\vA^T \vSigma \vA \right]^{-1} \left[ \vA^T \left(\sum\nolimits_k w_k\vmu_k\vmu_k^T \right) \vA \right] \right)
\end{align*}
\end{definition} 
\begin{remark}
While $J(\vA)$ is defined in terms of $\vA$, its value is solely a function of $\col{\vA}$, i.e. the column space of $\vA$.
We hence use the mapping matrix $\vA$ and projection subspace $\col{\vA}$ interchangeably in our discussion. 
\end{remark}
Note that $\vA^T \vSigma\vA$ and $\vA^T (\sum_k w_k\vmu_k\vmu_k^T)\vA$ denote the intra-component and the inter-component variances, respectively, \emph{after} projection via $\vA$. Thus, the Fisher discriminant measures the ratio of inter-component to intra-component variances post-projection. A \enquote{favorable} mapping $\vA$ will have a high value of the Fisher discriminant $J(\vA)$,  hence, the Fisher discriminant $J$ serves as the quantitative metric for evaluating the subspaces learned by different frameworks.

\textbf{Connection to linear discriminant analysis (LDA).} One can argue that it is not surprising for self-supervised methods to have better performance than classical spectral methods. The more interesting comparison would be against a \emph{supervised} method, which would help uncover the capabilities of contrastive methods. Therefore, we focus our attention on Linear Discriminant Analysis (LDA)~\cite{fukunaga2013introduction}. LDA is a \emph{supervised} LDR method that leverages the class information to learn the subspace where the data distribution conditioned on the class label (i.e. underlying component index) is maximally separated (see \cref{app:lda}). 
We highlight that assuming AeD (Def.~\ref{def:jointdistribution}) is strictly weaker than assuming a \emph{labeled} dataset. 
We show that this relaxed condition is enough for learning the Fisher subspace and gives similar performance to supervised LDR methods.

\section{Fisher subspace} \label{sec:fisher}


For any target dimension $r$, the linear map that has the top $r$ eigenvectors of $\vSigma^{-1}(\sum_k w_k\vmu_k\vmu_k^T)$ as its columns maximizes the Fisher discriminant. 
Similarly, for any mappings $\vA_1$ and $\vA_2$, if $\col{\vA_1} \subseteq \col{\vA_2}$, then we have $J(\vA_1) \leq J(\vA_2)$. In other words, $J(\cdot)$ is monotonic in the subspace of the matrices. Based on this, we will have the following definition, which is necessary in \emph{measuring the optimality of projections} learned by different methods.
\begin{definition}[Fisher Subspace] Given a SharedGMM (Def~\ref{def:shared_gmm}) parameterized by $\{w_k,\vmu_k,\vSigma\}_{k\in[K]}$, its \textit{Fisher subspace}~\cite{fukunaga2013introduction}, denoted by ${S}_{F}$, is the smallest subspace that achieves the maximum Fisher discriminant, which is given by:
\begin{align}\label{eqn:sfexplicit}
     S_F = \Span{\{\vSigma^{-1}\vmu_k\}_{k\in[K]}}
\end{align}
\end{definition}
We also use a unique property of the Fisher subspace~\cite{fukunaga2013introduction}; it is the smallest subspace preserving the class posterior probabilities for Gaussian mixtures with shared covariance, which we formalize next.
\begin{lemma}
Let $\{w_k,\vmu_k,\vSigma\}_{k\in[K]}$ be a SharedGMM and $\Pr(z=k|\vx)$ be the posterior probability of $\vx$ being drawn from the component $z$. Let $S_F$ be the mixture's Fisher subspace and $\vA_F$ be a projection matrix such that $\col{\vA_F} = S_F$. Then for any $\vx \sim F$,
\begin{align*}
    \Pr(z=k|\vx) = \Pr(z=k|\vA^{T}_F\vx)\  \forall k\in [K]
\end{align*}
\end{lemma}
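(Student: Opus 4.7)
The plan is to compute the posterior $\Pr(z=k|\vx)$ explicitly via Bayes' rule and exploit the shared covariance structure to show that it depends on $\vx$ only through $\vA_F^T\vx$. The claim then follows immediately from the tower property of conditional expectation.

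First, I would apply Bayes' rule to write $\Pr(z=k|\vx) \propto w_k\,\mathcal{N}(\vx;\vmu_k,\vSigma)$ and expand each Gaussian density. Because the covariance $\vSigma$ is shared across all components, both the normalizing constant $(2\pi)^{-d/2}|\vSigma|^{-1/2}$ and the quadratic factor $\exp(-\tfrac{1}{2}\vx^T\vSigma^{-1}\vx)$ are identical in every component and cancel between the numerator and denominator. What remains is
\begin{align*}
\Pr(z=k|\vx) = \frac{w_k \exp\bigl((\vSigma^{-1}\vmu_k)^T\vx - \tfrac{1}{2}\vmu_k^T\vSigma^{-1}\vmu_k\bigr)}{\sum_{j} w_j \exp\bigl((\vSigma^{-1}\vmu_j)^T\vx - \tfrac{1}{2}\vmu_j^T\vSigma^{-1}\vmu_j\bigr)},
\end{align*}
which depends on $\vx$ only through the $K$ linear statistics $\{(\vSigma^{-1}\vmu_k)^T\vx\}_{k\in[K]}$. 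This is exactly the step that relies on the SharedGMM assumption.

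Next, I would exploit the defining property $\vSigma^{-1}\vmu_k \in S_F = \col{\vA_F}$: each $\vSigma^{-1}\vmu_k$ is a linear combination of the columns of $\vA_F$, so $(\vSigma^{-1}\vmu_k)^T\vx$ is a linear function of $\vA_F^T\vx$. Consequently there exists a measurable function $g_k$ such that $\Pr(z=k|\vx)=g_k(\vA_F^T\vx)$. Since $\vA_F^T\vx$ is itself a function of $\vx$, the tower property of conditional expectation gives
\begin{align*}
\Pr(z=k|\vA_F^T\vx) = \mathbb{E}\bigl[\,\Pr(z=k|\vx)\bigm|\vA_F^T\vx\,\bigr] = \mathbb{E}\bigl[\,g_k(\vA_F^T\vx)\bigm|\vA_F^T\vx\,\bigr] = g_k(\vA_F^T\vx) = \Pr(z=k|\vx),
\end{align*}
which is the desired identity.

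There is no substantial obstacle: the entire argument rests on the algebraic cancellation enabled by the shared covariance, which is precisely what the SharedGMM structure provides and what fails for general (non-shared) covariance GMMs. A minor subtlety worth flagging is that the statement does not depend on the particular choice of $\vA_F$ with $\col{\vA_F}=S_F$, since any two such matrices are related by an invertible right-multiplication and therefore generate the same $\sigma$-algebra after projection, leaving the conditional probability on the right-hand side unambiguous.
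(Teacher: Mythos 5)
Your proof is correct. The paper does not actually supply a proof of this lemma; it is stated as a known property and attributed to Fukunaga. Your argument --- Bayes' rule, cancellation of the shared $\exp(-\tfrac{1}{2}\vx^T\vSigma^{-1}\vx)$ factor and normalizing constant, reduction of the posterior to a function of the linear statistics $(\vSigma^{-1}\vmu_k)^T\vx$, factoring through $\vA_F^T\vx$ because $\vSigma^{-1}\vmu_k\in\col{\vA_F}$, and then the tower property --- is precisely the standard textbook derivation, so there is nothing to compare against. Your closing remark about invariance under right-multiplication of $\vA_F$ is a worthwhile observation since it confirms the right-hand side is well defined.
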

{Crucially, this property implies that projecting a GMM onto its Fisher subspace will not result in mode collapse or erroneous mode merging. Moreover, any clustering algorithm operating in the lower dimensional subspace will observe the same class probabilities as in the original space.
Owing to these properties of the Fisher subspace, 
we argue that it is \emph{the optimal subspace for projection}, particularly for the class of shared covariance GMMs. Under the fully labeled SharedGMM setting, one could deduce that the subspace learned by (multi-class) LDA, $S_{LDA}$, for SharedGMM coincides with Fisher subspace $S_F$.}

\subsection{A simple example: Parallel pancakes~\cite{kannan2009spectral}}\label{subsec:parallel}
\begin{figure}[t]
\centering
\includegraphics[width=0.65\textwidth]{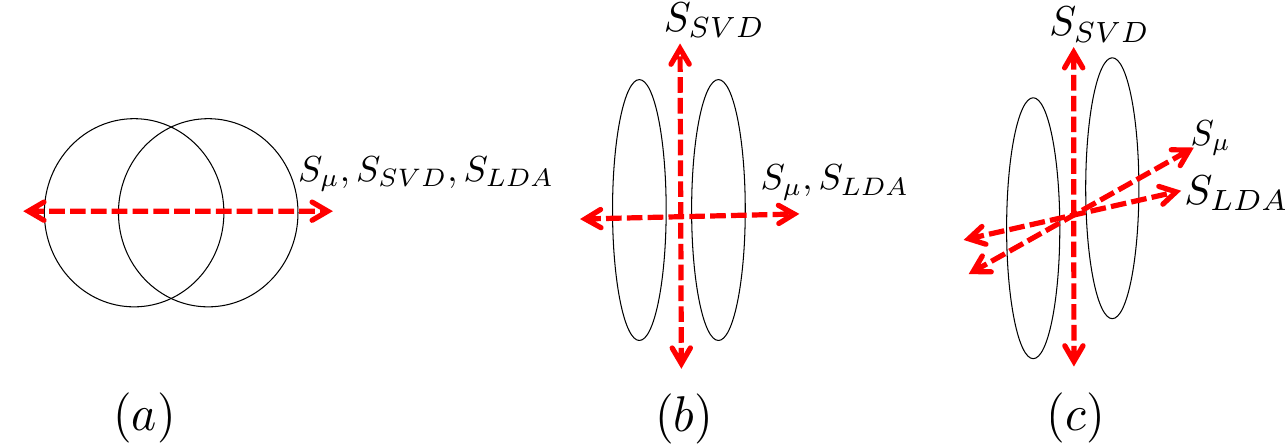}
\caption{(a) (Spherical Gaussians) $S_{SVD} = S_{LDA}$ and projection onto $S_{SVD}$ leads to \emph{well seperated} GMM. (b) (Non-spherical Gaussians) Large variance in one direction means $S_{SVD} \neq S_{LDA}$ and projection onto $S_{SVD}$ leads to mode collapse. (c) (Shifted non-spherical Gaussians) $S_{\vmu} \neq S_{LDA}$.}
\label{fig:fig1}
\vspace*{-15pt}
\end{figure}
Spectral methods~\cite{kannan2009spectral} are the standard methods for LDR. They are based on the principle of finding the best-fit subspace which 
is (generally) given by the \emph{top singular vectors of data covariance}. Formally, the $r$-dimensional \emph{SVD subspace}~\cite{vempala2004spectral} $S^{(r)}_{SVD}$ for a GMM $F$ is described as the top $r$ left singular vectors of $\mathbb{E}_{\vx \sim F}[\vx\vx^T]$. 
A vanilla spectral dimensionality reduction method hence projects the data points onto the $r$-dimensional SVD subspace.
~\cite{kannan2009spectral} proved that for \textit{spherical} GMMs (i.e. $\vSigma = \vI$), SVD subspace is equal to the mean subspace $S_{\vmu} = \Span{\vmu}$. Moreover, using Eq.~\eqref{eqn:sfexplicit} we can derive that $S_{LDA} = S_{\vmu} = S_{SVD}$. 
Hence, projection onto the SVD subspace is \emph{the optimal} dimensionality reduction method for \emph{spherical} GMMs.

However, it is easy to construct examples demanding for a sophisticated method. Spectral methods aim to maximally explain the data, leading to possibly \enquote{bad} projections when the variance in certain directions dominates the separation between the means.
In \cref{fig:fig1} we present such an example; consider a two component GMM that resembles “parallel pancakes” such that the components are narrow and separated along one direction, and spherical in all other directions. The two dimensional SVD subspace for this GMM is given by a plane parallel to the pancakes. The Fisher subspace (and equivalently LDA subspace), however, is the plane \emph{passing through the means}; formally, $S_{\mathrm{SVD}} \not\subset S_F$. In fact, the $S_{SVD}$ has the \emph{smallest} Fisher discriminant among all two-dimensional subspaces.
Finding the optimal subspace for non-spherical GMMs is known to be non-trivial~\cite{brubaker2008isotropic,kannan2009spectral}. 

\section{Contrastive learning for Gaussian mixture models}\label{sec:contrastive_gmm}

We analyze solving the dimensionality reduction task for GMMs with two popular self-supervised methods, SimCLR and SimSiam. Both methods build on using augmentation pairs, but differ in their optimization objective (specifically how they regularize). We show that both objectives are able to effectively leverage the augmentations and learn mappings onto the (optimal) Fisher subspace. 

\subsection{Optimization objective}\label{subsection:optimization_contrastive}
\textbf{InfoNCE loss~\cite{oord2018representation}} is popularly used in contrastive learning methods (like SimCLR). It learns representations that pull data points and their augmentations close together while pushing them away from other points in the embedding space. We define the InfoNCE objective for linear mappings as,
\begin{align} \label{eqn:infonce}
\begin{split}
& \mathcal{L}_{\mathrm{Info}}(\vA) =  -\underset{(\vx,\aug{\vx}) \sim \hat{F}}{\mathbb{E}}\Bigg[(\vA^T\vx)^T\vA^T\aug{\vx}\Bigg] + \underset{\vx\sim F}{\mathbb{E}} \Bigg[\log\bigg(\underset{\negative{\vx} \sim F}{\mathbb{E}}\bigg[\exp\big((\vA^T\vx)^T\vA^T\negative{\vx}\big)\bigg]\bigg)\Bigg].
\end{split}
\end{align}
where $(\vx,\hat{\vx}) \sim \hat{F}$ is a sample from augmentation-enabled distirbution (Def~\ref{def:jointdistribution}) such that $\vx\sim F$ and $\negative{\vx}\sim F$ are independent draws from the mixture. 
The attractive term (i.e. the first term) keeps $\vx$ and $\hat{\vx}$ close by maximizing their dot product, while regularization term penalizes {proximity} to random samples. In other terms, the first term increases inter-component variance 
while the second term decreases data variance and hence intra-component variance. 

\textbf{SimSiam loss~\cite{chen2021exploring}} is another popular self-supervised loss without the negatives. 
\begin{align*}
    \underset{(\vx,\aug{\vx}) \sim \hat{F}}{-\ \mathbb{E}}\Bigg[\bigg(\frac{\vA_p^T\vA^T\vx}{||\vA_p^T\vA^T\vx||_2}\bigg)^T\mathrm{StopGrad}\bigg(\frac{\vA^T\aug{\vx}}{||\vA^T\aug{\vx}||_2}\bigg)\Bigg]
\end{align*}
{Unlike InfoNCE, Simsiam only operates on augmentations (from $\hat{F}$) and doesn't use negatives. The loss is parameterized by two matrices; the mapping $\vA$ and the ``prediction head'' $\vA_p$, which is not utilized for projection. 
Observe that Simsiam might be prone to a collapsing solution, i.e., mapping all points onto the same vector. Prior work~\cite{tian2021understanding,jing2021understanding,wen2022mechanism} has argued that $\vA_p$, which makes the optimization asymmetric, and StopGrad, which zeros out the gradients flowing through it, play a crucial role together in the guiding the training dynamics, preventing the occurrence of collapse.}
Our goal is to analyze the fixed point of this loss function. For tractability, we introduce some simplifying assumptions and examine a modified loss:
\begin{align}\label{eqn:simsiam_loss}
    \mathcal{L}_{\mathrm{Siam}}(\vA) = & -\mathbb{E}_{(\vx,\aug{\vx}) \sim \hat{F}}\left[(\vA^T\vx)^T\vA^T\aug{\vx}\right] + \xi\ \mathbb{E}_{\vx\sim F} \left[||\vA^T\vx||^2\right]
\end{align}
We remove the StopGrad operation and set the prediction head (i.e. $\vA_p$) to $\vI$. 
We also trade the normalization term with a regularization term weighted by $\xi$. The attractive term in SimSiam behaves similar to InfoNCE loss, while the regularization term penalizes the norm of the projected points. The loss scales linearly with $\| \vA \|^2$, therefore, we impose a norm constraint when optimizing for $\vA$. 

{We have already established that maximizing total variance \emph{on its own} in the projected space is not the preferable strategy for dimensionality reduction. In the presence of augmentation (and negatives), self-supervised objectives could take a finer-grained approach by maximizing inter-component variance and keeping intra-component variance low, simultaneously.
Next, we will formalize our claim that self-supervised learning is a good proxy for fully supervised techniques (e.g., LDA).} 

\subsection{Main theorem}

Nex, we prove that InfoNCE and (simplified) Simsiam can find the Fisher-optimal projection matrix for the class of SharedGMMs.



\begin{theorem} \label{thm:sharedcov_infonce}
Suppose $F$ is a SharedGMM parameterized by $\{w_k,\vmu_k,\vSigma\}_{k\in[K]}$ and $\hat{F}$ is its \jointdistribution with bias $\delta$. 
Let $S_F$ be the Fisher subspace (Eqn~\ref{eqn:sfexplicit}) of $F$. 
Denote $\lambda_{\mathrm{min}}$ as the minimum non-zero eigenvalue of $\,\vSigma^{-\frac{1}{2}} \left( \sum_k w_k\vmu_k\vmu_k^T \right) \vSigma^{-\frac{1}{2}} $ and
\begin{align*}
    \vA_{\mathrm{Info}} = \underset{\vA \in \mathbb{R}^{d\times r}}{\mathrm{argmin}}\  \mathcal{L}_{\mathrm{Info}}(\vA), \quad\qquad \vA_{\mathrm{Siam}} = \underset{\begin{subarray}{c}
    \vA \in \mathbb{R}^{d\times r},\ ||\vA||_2\leq 1\\
    \end{subarray}}{\mathrm{argmin}}\   \mathcal{L}_{\mathrm{Siam}}(\vA)
\end{align*}
Then for some $r\geq K$, $S_{\mathrm{Info}} \triangleq \col{\vA_{\mathrm{Info}}}$, $S_{\mathrm{Siam}} \triangleq \col{\vA_{\mathrm{Siam}}}$, we have : 
\begin{itemize}
    \item $S_{\mathrm{Info}} \subseteq S_F$. If $\delta = 1$, then $S_{\mathrm{Info}} = S_F$
    \item $S_{\mathrm{Siam}} \subseteq S_F$. If $0<\xi<\frac{\delta \lambda_{\mathrm{min}}}{1+\lambda_{\mathrm{min}}}$, then $S_{\mathrm{Siam}} = S_F$
\end{itemize}

\end{theorem}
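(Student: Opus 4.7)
My plan is to first reduce both losses to explicit algebraic expressions in $\vA\vA^T$ using the moments of $F$ and $\hat{F}$, then exploit a spectral decomposition in a whitened basis to place the minimizer inside $S_F$. Set $B := \sum_k w_k\vmu_k\vmu_k^T$ and $\bar{\vmu} := \sum_k w_k\vmu_k$; a direct calculation from the definition of $\hat{F}$ gives $\mathbb{E}_{\vx\sim F}[\vx\vx^T] = \vSigma + B$ and $\mathbb{E}_{(\vx,\hat{\vx})\sim \hat{F}}[\vx\hat{\vx}^T] = \delta B + (1-\delta)\bar{\vmu}\bar{\vmu}^T$. Writing $\tilde{\vmu}_k := \vSigma^{-1/2}\vmu_k$, $\tilde{B} := \vSigma^{-1/2}B\vSigma^{-1/2}$, $\tilde{\bar{\vmu}} := \vSigma^{-1/2}\bar{\vmu}$, and $\tilde{\vA} := \vSigma^{1/2}\vA$, the whitened Fisher subspace is $\tilde{S}_F = \Span{\{\tilde{\vmu}_k\}} = \vSigma^{1/2}S_F$; both $\tilde{B}$ and $\tilde{\bar{\vmu}}\tilde{\bar{\vmu}}^T$ are supported in $\tilde{S}_F$ and annihilate its Euclidean orthogonal complement.

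\textbf{SimSiam.} Substituting the moments produces the quadratic form $\mathcal{L}_{\mathrm{Siam}}(\vA) = \mathrm{Tr}(\vA^T M \vA)$ with $M = \xi\vSigma - (\delta-\xi)B - (1-\delta)\bar{\vmu}\bar{\vmu}^T$, which in whitened coordinates becomes $\mathrm{Tr}(\tilde{\vA}^T\tilde{M}\tilde{\vA})$ for $\tilde{M} = \xi I - (\delta-\xi)\tilde{B} - (1-\delta)\tilde{\bar{\vmu}}\tilde{\bar{\vmu}}^T$. On $\tilde{S}_F^\perp$, $\tilde{M}$ reduces to $\xi I \succ 0$; on $\tilde{S}_F$ its eigenvalues take the form $\xi - \beta$ with $\beta$ an eigenvalue of $(\delta-\xi)\tilde{B} + (1-\delta)\tilde{\bar{\vmu}}\tilde{\bar{\vmu}}^T$. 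Every direction in the image of $\tilde{B}$ gives $\beta \geq (\delta-\xi)\lambda_{\min}$, and the hypothesis $\xi < \delta\lambda_{\min}/(1+\lambda_{\min})$ rearranges precisely to $\xi - (\delta-\xi)\lambda_{\min} < 0$, so every such direction is a strictly-negative-eigenvalue direction of $\tilde{M}$. A generalized-eigenvalue analysis of the norm-constrained minimizer then places its column space inside $\vSigma^{-1/2}\tilde{S}_F = S_F$, yielding $S_{\mathrm{Siam}}\subseteq S_F$ in general and the equality $S_{\mathrm{Siam}} = S_F$ under the threshold together with $r \geq \dim S_F$.

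\textbf{InfoNCE.} The attractive term is identical. For the log-partition term, conditioning on $\tilde{z} = k$ makes $\vA^T\tilde{\vx}\mid\tilde{z}=k$ Gaussian, so the Gaussian MGF gives
\[
\log\mathbb{E}_{\tilde{\vx}\sim F}\!\left[\exp\bigl((\vA^T\vx)^T \vA^T\tilde{\vx}\bigr)\right] = \tfrac{1}{2}\vx^T P\vSigma P\vx + \log\sum_k w_k\exp(\vx^T P\vmu_k), \quad P := \vA\vA^T;
\]
after the outer $\vx$-expectation the first piece becomes $\tfrac{1}{2}\mathrm{Tr}(P\vSigma P(\vSigma+B))$, and the second is a log-sum-exp in linear functionals of $\vx$. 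Working in whitened coordinates and column-wise splitting $\tilde{\vA} = \tilde{\vA}_\parallel + \tilde{\vA}_\perp$ into its $\tilde{S}_F$ and $\tilde{S}_F^\perp$ components, three monotonicities force $\tilde{\vA}_\perp = 0$ at the optimum: (i) the attractive term depends only on $\tilde{\vA}_\parallel$ because $\tilde{B}$ and $\tilde{\bar{\vmu}}\tilde{\bar{\vmu}}^T$ live in $\tilde{S}_F$; (ii) using Euclidean orthogonality of parallel and perpendicular columns, $\tilde{\vA}^T\tilde{\vA} = \tilde{\vA}_\parallel^T\tilde{\vA}_\parallel + \tilde{\vA}_\perp^T\tilde{\vA}_\perp$, and the quadratic noise piece $\tfrac{1}{2}\mathrm{Tr}(\tilde{P}^2(I+\tilde{B}))$ strictly increases in $\tilde{\vA}_\perp^T\tilde{\vA}_\perp$; and (iii) writing $\vy^T\tilde{P}\tilde{\vmu}_k$ via the independent split $\vy = \vy_\parallel + \vy_\perp$ shows it equals a deterministic function of $(\vy_\parallel,\tilde{\vA}_\parallel)$ plus a zero-mean Gaussian perturbation depending on $\tilde{\vA}_\perp^T\vy_\perp$, so Jensen's inequality applied to the convex log-sum-exp in this perturbation shows the log-partition is minimized at $\tilde{\vA}_\perp = 0$. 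Combining these gives $\col{\vA_{\mathrm{Info}}}\subseteq S_F$. For $\delta = 1$ the attractive term reduces to $-\sum_k w_k\lVert\tilde{\vA}_\parallel^T\tilde{\vmu}_k\rVert^2$, which strictly rewards every direction of $\tilde{S}_F$; balancing against the repulsive pieces forces $\col(\tilde{\vA}_\parallel) = \tilde{S}_F$, hence $S_{\mathrm{Info}} = S_F$.

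\textbf{Main obstacle.} The SimSiam step is essentially a clean eigenvalue computation and the threshold $\delta\lambda_{\min}/(1+\lambda_{\min})$ falls out of the sign condition on $\xi - (\delta-\xi)\lambda_{\min}$. The technical heart of the argument is InfoNCE log-partition control: the log-sum-exp does not split additively between $\tilde{\vA}_\parallel$ and $\tilde{\vA}_\perp$, so I must invoke Jensen against the conditional Gaussianity of $\vy_\perp$ to justify monotonicity in $\tilde{\vA}_\perp$; checking strictness (needed to rule out degenerate proper subsets of $S_F$ when $\delta = 1$, and aligning with the sufficient-statistic identity $\Pr(z\mid\vx) = \Pr(z\mid\vA_F^T\vx)$) will be the most delicate step.
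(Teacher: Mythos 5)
Your InfoNCE argument for the containment $S_{\mathrm{Info}} \subseteq S_F$ is correct but genuinely different from the paper's. The paper first whitens and then argues in the isotropic model by a symmetry device: it reflects an optimal $\vB^*$ through a hyperplane orthogonal to the means, shows the reflected $\bar{\vB}$ attains the same loss, and invokes strict convexity of the log-sum-exp (their Lemma~A.1) to produce a contradiction. You instead integrate out the negative explicitly via the Gaussian MGF, decomposing the log-partition into a pure quadratic term $\tfrac{1}{2}\mathrm{Tr}\big(\tilde{P}^2(I+\tilde{B})\big)$ plus a log-sum-exp in the linear statistics $\vx^T P\vmu_k$, and then exploit the independence of $\vy_\perp$ from $\vy_\parallel$ (and from the mixture label) to apply Jensen. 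This is a clean, more constructive mechanism and it does work for the $\subseteq$ direction, with strictness supplied by the Frobenius-norm growth of $\tilde{P}$ as $\tilde{\vA}_\perp$ turns on.

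However, there are two genuine gaps. For InfoNCE with $\delta = 1$, the sentence ``balancing against the repulsive pieces forces $\col{\tilde{\vA}_\parallel} = \tilde{S}_F$'' is not a proof but a restatement of the claim, and it is exactly where the difficulty lies: one must show that the attractive term's marginal gain from activating a previously unused direction $\vu_j \in \tilde{S}_F$, namely $\sum_k w_k a_k^2$ with $a_k = \tilde{\vmu}_k^T\vu_j$, strictly dominates the log-partition's marginal cost. The paper computes that cost to be $\va^T\vW\vF\vW\va$ for a nontrivial matrix $\vF$ of cluster-to-cluster soft weights, and closes the argument with a Perron--Frobenius step: $\sqrt{\vW}\vF\sqrt{\vW}$ is entrywise positive with unit row sums, so its top eigenvalue is $1$ with eigenvector $\sqrt{\vW}\mathbf{1}$, and since $\va\notin\Span{\mathbf{1}}$ the quadratic form is strictly below $\sum_k w_k a_k^2$. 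Without this (or a substitute), the equality $S_{\mathrm{Info}}=S_F$ at $\delta=1$ is not established.

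The SimSiam step also has a concrete flaw. You correctly reduce the objective to $\mathrm{Tr}(\vA^T M\vA)$ and pass to $\mathrm{Tr}(\tilde{\vA}^T\tilde{M}\tilde{\vA})$, then analyze eigenvalue signs of $\tilde{M}$. But the constraint $\|\vA\|_2\le 1$ does not whiten: it becomes $\|\vSigma^{-1/2}\tilde{\vA}\|_2\le 1$, not $\|\tilde{\vA}\|_2\le 1$. The constrained minimizer of a linear objective in the original coordinates has column space equal to the span of negative-eigenvalue eigenvectors of $M = \xi\vSigma - (\delta-\xi)B - (1-\delta)\bar{\vmu}\bar{\vmu}^T$, and these are not $\vSigma^{\pm 1/2}$-images of eigenvectors of $\tilde{M}$ (the two are related by a non-orthogonal similarity, which does not preserve eigenvectors). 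The phrase ``a generalized-eigenvalue analysis \ldots places its column space inside $S_F$'' asserts precisely the step that fails; indeed for small $K$ with a non-axis-aligned $\vSigma$ one can find the negative eigenvector of $M$ lying strictly outside $S_F$. Any fix must analyze the minimizer under the actual constraint (equivalently $0\preceq\vB\preceq I$), as the paper attempts with its projection argument on $\vB'$, rather than silently transporting the constraint through whitening.
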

\begin{remark}
Access to \jointdistribution 
is \textbf{provably sufficient} for learning the Fisher subspace for SharedGMMs matching the performance of supervised LDA. The result implies that class labels which are typically assumed by supervised LDR methods like LDA, are not necessary.
\end{remark}

The proof of \cref{thm:sharedcov_infonce} can be found in \cref{app:proofofmain,app:simsiamproof}. 
For population setting, the fact that self-supervised methods learn mapping onto a subset of the Fisher space implies that the projected points do not capture any \textit{noise} and only contain (a subset of) the useful \textit{features}. Note that spectral methods do not have guarantees of this form. The theorem also states if we have perfect augmentations for InfoNCE loss, i.e. $\delta = 1$, we cover \emph{all directions} in the Fisher space and learn it exactly. The equivalence is true for SimSiam loss when the regularization coefficient $\xi$ is upper bounded appropriately. On a related front, this precludes learning of the collapsing solution for SimSiam loss, verifying the claims in prior work on a simplified version of the loss function. 
Most importantly, self-supervised methods in question learn \emph{the same subspace} as \emph{supervised} dimensionality reduction methods like LDA which needs the knowledge of underlying components for all the samples. 

\textbf{Technical difficulties.} Although we do not provide a proof for $S_{\mathrm{Info}} = S_F$ when $0<\delta<1$, we conjecture that it is true, for which we include an empirical discussion in \cref{sec:exp}. \cref{fig:delta_plot} suggests that $S_{\mathrm{Info}} = S_F$ even for $\delta <1$, analysis of which we leave as future work. Additionally, we do not have an exact characterization of InfoNCE solution; we characterize just the column space of $\vA_{\mathrm{Info}}$. 
Our empirical results show that the mapped points are well-separated compared to a projection onto the Fisher subspace (see Fig~\ref{fig:scaling_plot}).
Furthermore, while we prove that both contrastive and non-contrastive objectives learn the same subspace, further investigation is required for their direct comparison. In \cref{sec:simsiam_eg}, we provide an example where $\vA_{\mathrm{Info}}$ is strictly better than $\vA_{\mathrm{SimSiam}}$ (for $r<K$). 
We also present an empirical study on the effect of $r$ (see Figure~\ref{fig:rank_plot}).

\section{Multi-modal Gaussian mixture models} \label{sec:multimodal}

Previously, we considered a setup where a point and its augmentation follow the same distribution. It is a natural assumption for methods like SimCLR and SimSiam where augmentations of points (i.e. images) are defined by transformations (e.g., cropping, color jittering) on the image. This assumption may not hold in general. Text embeddings models like DPR~\cite{karpukhin2020dense} or image-text embedding models like CLIP~\cite{radford2021learning} consider input samples as a pair of points following (possibly) different distribution. 

For instance, each sample could be a pair of an image with its corresponding caption (for CLIP) or a search query with a relevant document that answers the query (for DPR). The objective in this \emph{multi-modal} setup is to learn a joint representation space for both modalities. 
These joint representations can be used downstream for finding similarities between data points belonging to different ambient spaces, and are surprisingly competitive with fully supervised representations~\cite{radford2021learning}. Our goal is to theoretically analyze the representations learned by such model, specifically the CLIP model. Next, we define CLIP-GMM to capture multi-modal data in a theory-friendly setting.  
\begin{definition}(CLIP-GMM)
\label{def:clip_gmm} 
A \emph{CLIP Gaussian mixture} (CLIP-GMM) is defined as the probability distribution  
$F_{\mathrm{clip}} = \sum_{k \in [K]} w_k \mathcal{N}(\vmu_{V,k},\vSigma_V) \times \mathcal{N}(\vmu_{T,k},\vSigma_T)$
where $w_k$ are the mixture weights, $\{\vmu_{V,k}\}_{k\in [K]},\vSigma_V$ and $\{\vmu_{T,k}\}_{k\in [K]},\vSigma_T$ are the parameters for the two coordinate spaces.
\end{definition}
CLIP-GMM is a mixture of product distribution over Gaussians. To elaborate, sampling a pair $(\vx_V,\vx_T) \sim F_{\mathrm{clip}}$ is a two step process. We first sample an underlying component index, and then draw independent samples from the component \emph{with the same index} in the respective coordinate space. 
$F_{\mathrm{clip}}$ can be used to define marginal distribution over each coordinate space, for instance, 
$F_V = \sum_{k \in [K]} w_k \mathcal{N}(\vmu_{V,k},\vSigma_V)$.


\subsection{Optimization objective} 
Following CLIP~\cite{radford2021learning}, we want to learn different representation functions for each modality. We let these functions be linear mappings, i.e., $\vA_V \in \mathbb{R}^{d_1 \times r},\vA_T \in \mathbb{R}^{d_2 \times r}$. Recall that optimal mapping matrices for each coordinate space would have their column space equal to that of Fisher subspace for the marginal distributions $F_V,F_T$. Concretely, let the representation of a sample $(\vx_V,\vx_T)$ be given by $(\vA^T_V\vx_V,\vA^T_T\vx_T)$, where $\vA_V,\vA_T$ are the mappings. Then, we define CLIP InfoNCE loss~\cite{radford2021learning} as,
\begin{align*} 
 \mathcal{L}_{\mathrm{clip}} = \underset{(\vx_V,\vx_T)\sim F_{\mathrm{clip}}}{\mathbb{-E}} \left[(\vA^T_T\vx_T)^T\vA^T_V\vx_V \right] 
+\underset{\vx_T\sim F_T}{\mathbb{E}} \Big[\log \Big(\underset{\vx_V \sim F_V} {\mathbb{E}} \left[ \exp \left( (\vA^T_T\vx_T)^T\vA^T_V\vx_V \right) \right] \Big) \Big]
\end{align*}
Each coordinate space serves as augmentation for the other. Similar to InfoNCE, CLIP InfoNCE attracts embeddings of $\vx_V$ and $\vx_T$ via the first term, while regularizing with the Log-Sum-Exp term.

\subsection{Results}

Our main result for CLIP-GMM states that we can learn a subset of Fisher subspace for the constituent modalities by minimizing the CLIP InfoNCE loss. We state our results in the following theorem.  
\begin{theorem} \label{thm:clipgmm_infonce}
Suppose $\{w_k,\vmu_{V,k},\vmu_{T,k},\vSigma_V,\vSigma_T\}_{k\in[K]}$ is a CLIP-GMM. 
Let the Fisher subspace of $F_V$ be $S_{V,F}$ and $F_T$ be $S_{T,F}$. 
Denote $\vA^*_V,\vA^*_T$ as the optimal solution of the CLIP InfoNCE loss,
\begin{align*}
    \vA^*_V,\vA^*_T = \underset{\begin{subarray}{c}
    \vA_V \in \mathbb{R}^{d_1 \times r},\ \vA_T \in \mathbb{R}^{d_2 \times r}
    \end{subarray}}{\mathrm{argmin}}\  \mathcal{L}_{\mathrm{clip}}(\vA_V,\vA_T) 
\end{align*}
For any $r\geq K$, let $S_{V,\mathrm{clip}} = \col{\vA^*_V}$ and $S_{T,\mathrm{clip}} = \col{\vA^*_T}$. Then, $S_{V,\mathrm{clip}} \subseteq S_{V,F}$ and $S_{T,\mathrm{clip}} \subseteq S_{T,F}$.
\end{theorem}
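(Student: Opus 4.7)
The plan is to reduce the problem to an isotropic setting via whitening, derive a closed form for $\mathcal{L}_{\mathrm{clip}}$, and then argue that components of $\vA_V$ and $\vA_T$ orthogonal to the respective Fisher subspaces cannot improve the loss. For the whitening step, set $\tilde{\vx}_V=\vSigma_V^{-1/2}\vx_V$, $\tilde{\vx}_T=\vSigma_T^{-1/2}\vx_T$, $\tilde{\vA}_V=\vSigma_V^{1/2}\vA_V$, $\tilde{\vA}_T=\vSigma_T^{1/2}\vA_T$; this leaves $\mathcal{L}_{\mathrm{clip}}$ invariant and turns the marginals into isotropic Gaussian mixtures with means $\tilde{\vmu}_{V,k}=\vSigma_V^{-1/2}\vmu_{V,k}$, $\tilde{\vmu}_{T,k}=\vSigma_T^{-1/2}\vmu_{T,k}$ and identity covariance. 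Under these coordinates the Fisher subspaces become the mean subspaces $\tilde S_{V,F}=\Span{\{\tilde{\vmu}_{V,k}\}_{k\in[K]}}$ and $\tilde S_{T,F}=\Span{\{\tilde{\vmu}_{T,k}\}_{k\in[K]}}$, so it suffices to prove the claim in the whitened setting. Writing $\vB=\tilde{\vA}_T\tilde{\vA}_V^T$ and integrating out $\tilde{\vx}_V$ inside the inner expectation via the Gaussian moment generating function yields the closed form
\[
\mathcal{L}_{\mathrm{clip}} = -\sum_k w_k\,\tilde{\vmu}_{T,k}^T\vB\,\tilde{\vmu}_{V,k} + \tfrac{1}{2}\mathrm{Tr}(\vB\vB^T\vM_T) + \underset{\tilde{\vx}_T}{\mathbb{E}}\!\left[\log\sum_k w_k\exp(\tilde{\vx}_T^T\vB\,\tilde{\vmu}_{V,k})\right],
\]
where $\vM_T\triangleq\vI+\sum_k w_k\tilde{\vmu}_{T,k}\tilde{\vmu}_{T,k}^T$ is positive definite.

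Decompose $\tilde{\vA}_V=\tilde{\vA}_V^\parallel+\tilde{\vA}_V^\perp$ and $\tilde{\vA}_T=\tilde{\vA}_T^\parallel+\tilde{\vA}_T^\perp$ with the parallel parts having columns in the respective Fisher subspaces, and compare the loss at $(\tilde{\vA}_V,\tilde{\vA}_T)$ with the loss at $(\tilde{\vA}_V^\parallel,\tilde{\vA}_T^\parallel)$. The attractive term is identical, because $\tilde{\vA}_V^T\tilde{\vmu}_{V,k}=(\tilde{\vA}_V^\parallel)^T\tilde{\vmu}_{V,k}$ and $\tilde{\vmu}_{T,k}^T\tilde{\vA}_T=\tilde{\vmu}_{T,k}^T\tilde{\vA}_T^\parallel$. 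In the quadratic term, column-orthogonality gives $\tilde{\vA}_V^T\tilde{\vA}_V=(\tilde{\vA}_V^\parallel)^T\tilde{\vA}_V^\parallel+(\tilde{\vA}_V^\perp)^T\tilde{\vA}_V^\perp$, and $\vM_T$ leaves the $\tilde S_{T,F}/\tilde S_{T,F}^\perp$ decomposition invariant (since each $\tilde{\vmu}_{T,k}\in\tilde S_{T,F}$); this makes $\mathrm{Tr}(\vB\vB^T\vM_T)$ split into four blocks, three of which are non-negative and vanish exactly when both perpendicular pieces are set to zero. For the log-sum-exp term, $\tilde{\vA}_V^\perp$ does not appear at all (only $\tilde{\vA}_V^T\tilde{\vmu}_{V,k}$ enters), while $\tilde{\vA}_T^\perp$ enters through $(\tilde{\vA}_T^\perp)^T\tilde{\vx}_T$, which by isotropy and $\tilde{\vmu}_{T,k}\in\tilde S_{T,F}$ is a zero-mean Gaussian $\mathcal{N}(0,(\tilde{\vA}_T^\perp)^T\tilde{\vA}_T^\perp)$ independent of the mixture label and of $(\tilde{\vA}_T^\parallel)^T\tilde{\vx}_T$. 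Conditioning on $(\tilde{\vA}_T^\parallel)^T\tilde{\vx}_T$ and applying Jensen's inequality to the convex log-sum-exp function shows the outer expectation is at least as large when this zero-mean perturbation is present as when it is absent. Combining the three terms gives $\mathcal{L}_{\mathrm{clip}}(\tilde{\vA}_V^\parallel,\tilde{\vA}_T^\parallel)\le\mathcal{L}_{\mathrm{clip}}(\tilde{\vA}_V,\tilde{\vA}_T)$, so an optimum exists with $\col{\tilde{\vA}_V^*}\subseteq\tilde S_{V,F}$ and $\col{\tilde{\vA}_T^*}\subseteq\tilde S_{T,F}$; inverting the whitening yields $S_{V,\mathrm{clip}}\subseteq S_{V,F}$ and $S_{T,\mathrm{clip}}\subseteq S_{T,F}$.

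The main obstacle is the log-sum-exp step, where the Jensen argument must be justified carefully: one needs (i) $(\tilde{\vA}_T^\perp)^T\tilde{\vx}_T$ to be zero-mean in every mixture component, which holds because $\tilde{\vmu}_{T,k}\in\tilde S_{T,F}$ forces $(\tilde{\vA}_T^\perp)^T\tilde{\vmu}_{T,k}=0$, and (ii) independence from $(\tilde{\vA}_T^\parallel)^T\tilde{\vx}_T$, which follows from the identity within-component covariance together with the orthogonality of the two column spaces (zero cross-covariance plus joint Gaussianity within each component, lifted to the mixture). A secondary subtlety is that the argmin is typically not unique, so the theorem should be read as the existence of an optimum lying inside the Fisher subspaces, which the construction above provides.
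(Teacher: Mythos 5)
Your proof takes a genuinely different route from the paper's and it is correct. The paper's argument works with $\vB' = \vSigma_T^{1/2}\vA_T\vA_V^T\vSigma_V^{1/2}$ directly and never derives a closed form: it assumes a direction $\vv$ orthogonal to all whitened means lies in the row (or column) space of the optimizer, reflects $\vB'$ across it, observes that $\vx_T'^T\vB'\tilde{\vx}_V'$ and its reflected counterpart are identically distributed so the loss is unchanged, and then invokes strict convexity of the loss in $\vB'$ to derive a contradiction at the midpoint. Your argument instead integrates out the negative $\vx_V$ using the Gaussian moment-generating function to obtain the explicit three-term form $-\sum_k w_k\tilde{\vmu}_{T,k}^T\vB\tilde{\vmu}_{V,k} + \tfrac{1}{2}\mathrm{Tr}(\vB\vB^T\vM_T) + \mathbb{E}_{\tilde{\vx}_T}[\log\sum_k w_k\exp(\tilde{\vx}_T^T\vB\tilde{\vmu}_{V,k})]$, and shows term-by-term that dropping the components of $\tilde{\vA}_V,\tilde{\vA}_T$ orthogonal to the Fisher subspaces can only decrease the loss: the attractive term is unchanged, the quadratic term shrinks because the off-Fisher blocks of $\mathrm{Tr}(\vB\vB^T\vM_T)$ are nonnegative, and the log-sum-exp term shrinks by Jensen applied to the zero-mean perturbation $(\tilde{\vA}_T^\perp)^T\tilde{\vx}_T$, which your independence argument correctly identifies as having component-independent law $\mathcal{N}(0,(\tilde{\vA}_T^\perp)^T\tilde{\vA}_T^\perp)$ and zero cross-covariance with $(\tilde{\vA}_T^\parallel)^T\tilde{\vx}_T$.

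The trade-off between the two approaches is as follows. The paper's reflection-plus-strict-convexity argument, once strict convexity in $\vB'$ is fully justified, yields a strict improvement and therefore rules out \emph{every} optimizer having energy outside the Fisher subspaces (at the level of the row and column space of $\vB'$); however it proves the claim about $\vB'$, and transferring it to $\col{\vA_V^*}$ and $\col{\vA_T^*}$ has an unaddressed gap when $\vA_T^*$ or $\vA_V^*$ is column-rank-deficient. Your argument works directly with $\vA_V,\vA_T$ and so avoids that transfer issue, but as you correctly flag, it only produces a weak inequality, hence only establishes the existence of an optimizer inside the Fisher subspaces rather than the statement for all optimizers. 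Since the argmin is in any case not unique (the loss depends only on $\vB$), this is a reasonable reading of the theorem, and your explicit closed form for $\mathcal{L}_{\mathrm{clip}}$ is additional structural insight the paper's proof does not surface. One small wording nit: your claim that the three off-Fisher blocks of the quadratic term \enquote{vanish exactly when both perpendicular pieces are set to zero} is too strong (they can vanish by accidental orthogonality without the perpendicular pieces vanishing); you only need, and only use, the weak direction of this equivalence, so the proof is unaffected.
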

The proof of \cref{thm:clipgmm_infonce} can be found in \cref{app:clipproof}. 
In simple words, CLIP InfoNCE learns the subset of the Fisher subspace instead of the \emph{exact space}, which is \emph{weaker} than the single-modal setting. This is due to the fact that the means in the two spaces can vary arbitrarily. Hence, one can choose means and covariances adversarially such that particular directions in the Fisher subspace of either $F_V$ or $F_T$ does not contribute to the inter-component distance and are not learned by the CLIP InfoNCE. For certain special configuration of model parameters, such as $\vSigma_T^{-\frac{1}{2}}\vmu_{T,k} = \vSigma_V^{-\frac{1}{2}}\vmu_{V,k}$, we can achieve $S_{T,\mathrm{clip}} = S_{T,F}$ and $S_{V,\mathrm{clip}} = S_{V,F}$. {The result still shows that self-supervised learning for multi-modal data is better than non-supervised methods. Accessing augmentations are strictly weaker than the knowledge of the labels, and they also naturally occur in the multi-modal setting. The fact that contrastive losses learn a subset of the optimal subspace verifies their capabilities.}

\begin{figure*}[t]
\begin{subfigure}{0.495\linewidth}
\includegraphics[width=0.48\linewidth]{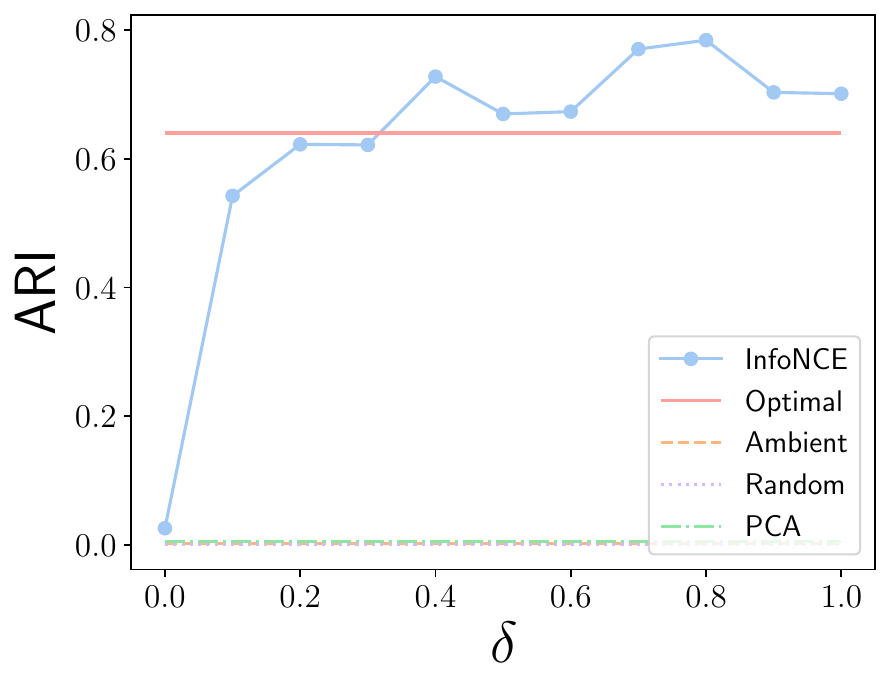}
\includegraphics[width=0.48\linewidth]{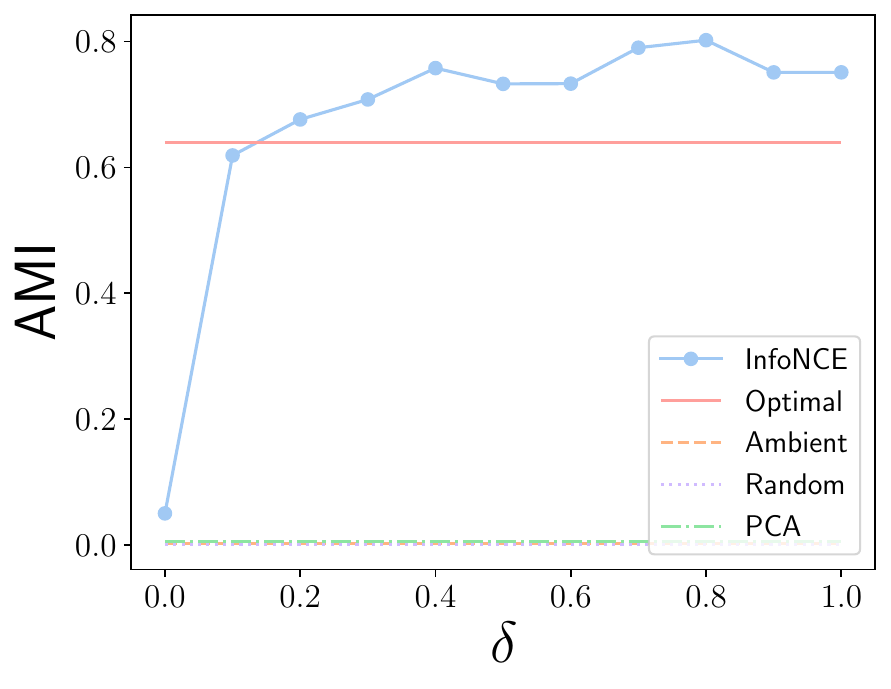}
\caption{Noise in augmentation}
\label{fig:delta_plot}
\end{subfigure}\hfill
\begin{subfigure}{0.495\linewidth}
\includegraphics[width=0.48\linewidth]{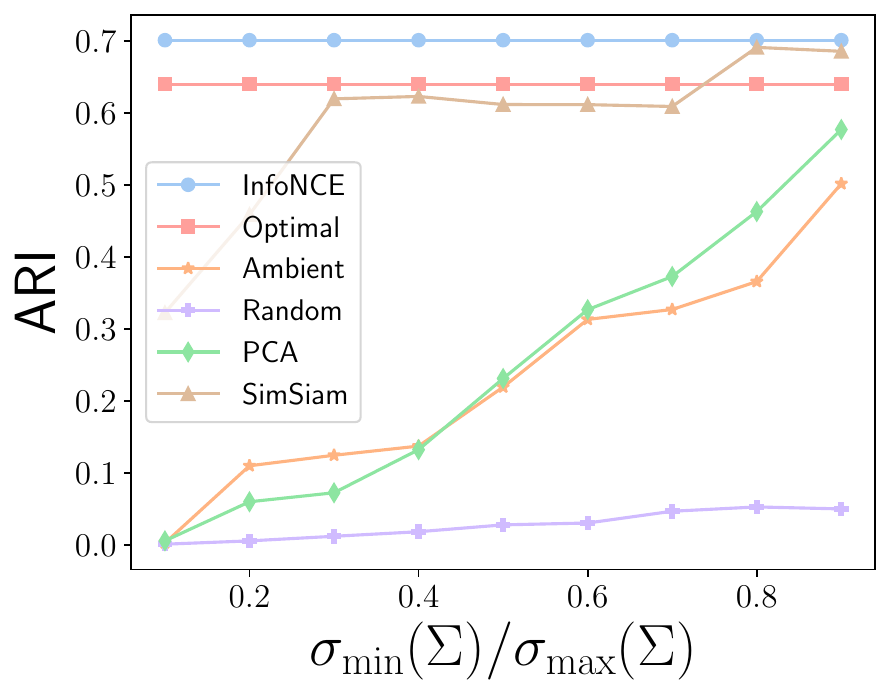}
\includegraphics[width=0.48\linewidth]{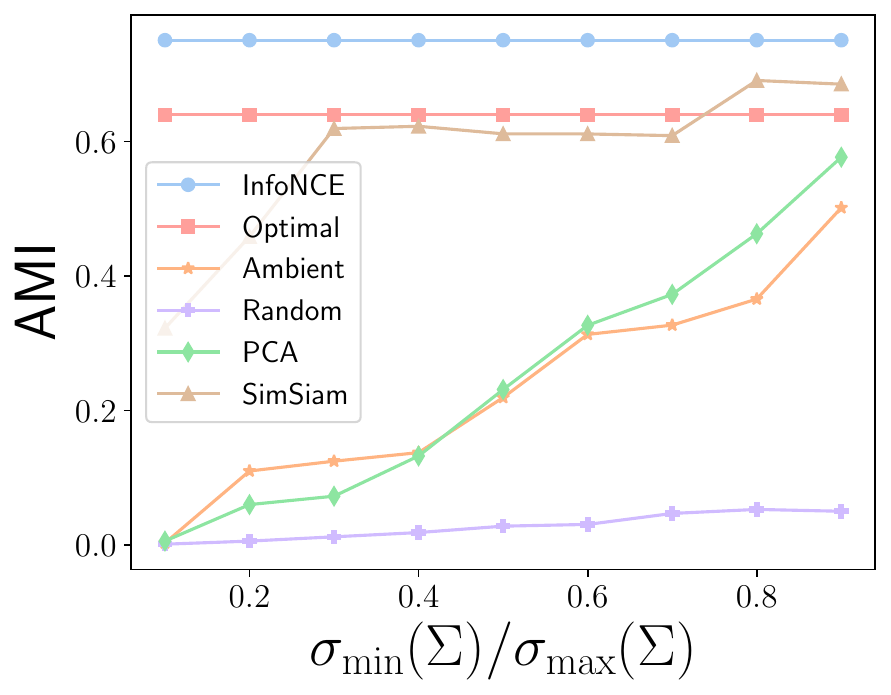}
\caption{Variance $\perp S_F$}
\label{fig:condition_plot}
\end{subfigure}

\medskip

\begin{subfigure}{0.495\linewidth}
\includegraphics[width=0.48\linewidth]{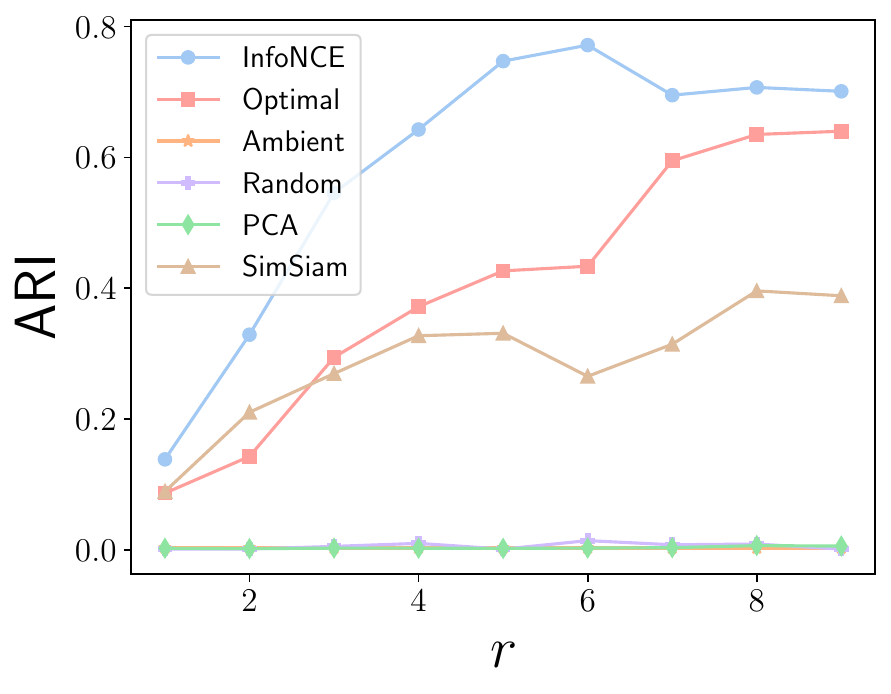}
\includegraphics[width=0.48\linewidth]{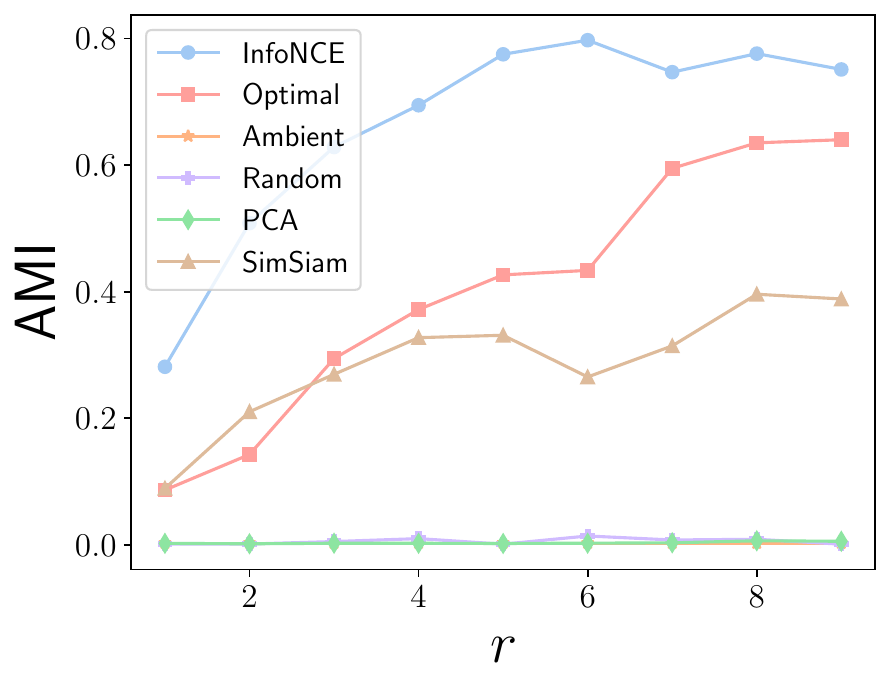}
\caption{Dimension of projected space}
\label{fig:rank_plot}
\end{subfigure}\hfill
\begin{subfigure}{0.495\linewidth}
\includegraphics[width=0.48\linewidth]{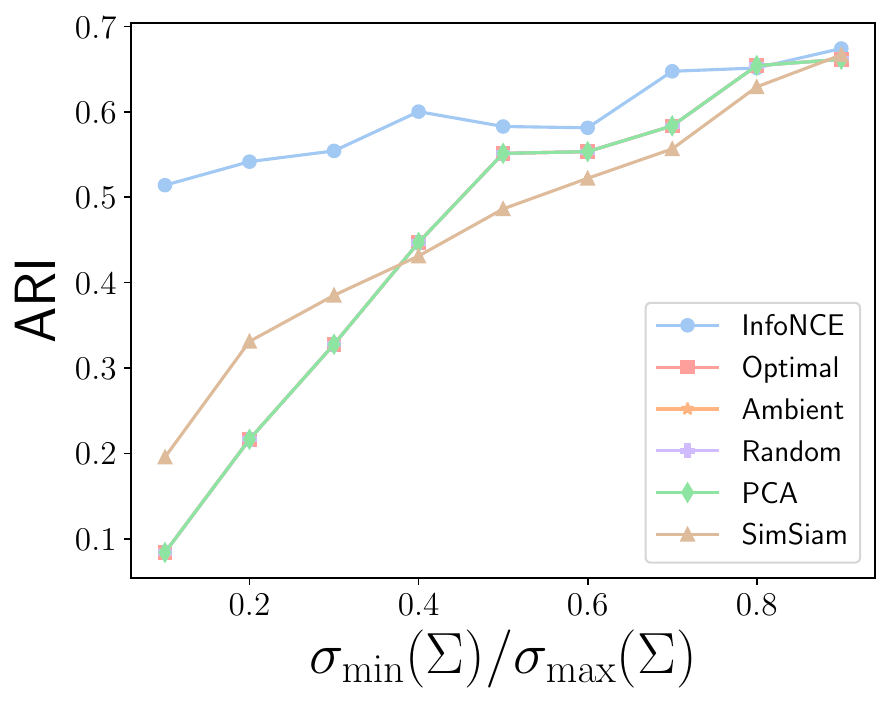}
\includegraphics[width=0.48\linewidth]{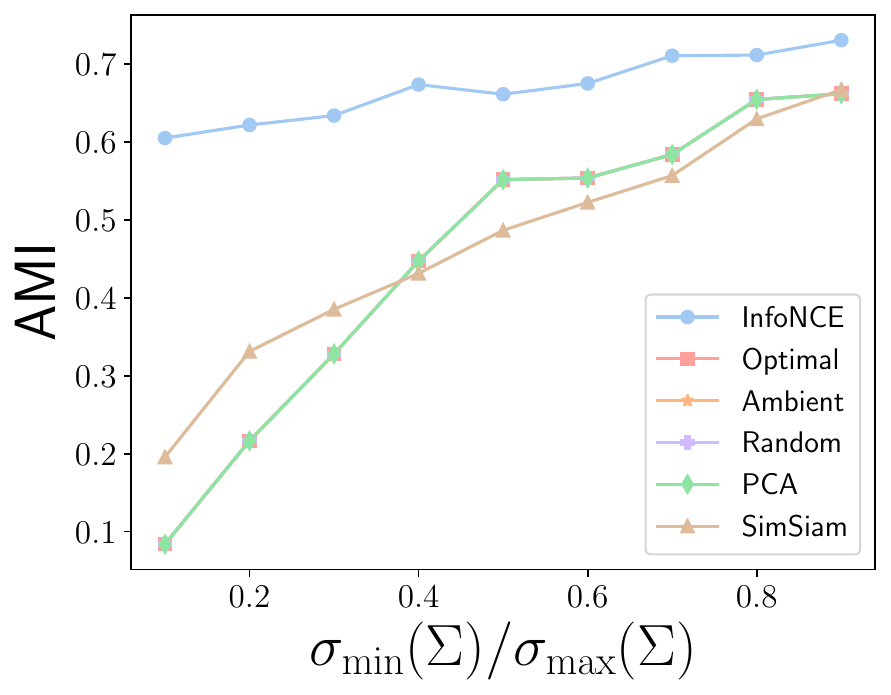}
\caption{Variance within $S_F$}
\label{fig:scaling_plot}
\end{subfigure}
\caption{\label{fig:mi_plots} We empirically validate our theoretical findings for four main settings. (a) InfoNCE is robust to noise in augmentations for various values of $\delta$. (b) InfoNCE (and SimSiam) are invariant to variance \emph{orthogonal} to Fisher subspace. (c) InfoNCE outperforms spectral methods for every mapping dimension. (d) InfoNCE learns a good scaling within the Fisher subspace}
\end{figure*}

\section{Experiments}\label{sec:exp}

We validate our theoretical findings with experiments on synthetic and real data. For the synthetic data, we study the effect of noise $\delta$ in \jointdistribution, rank $r$ of the projection matrix and condition number of covariance matrix on learned representations. For the real data experiments, we compare self-supervised methods against baselines for clustering CIFAR100 dataset on low dimensional subspaces using K-Means.

\subsection{Setup}
We evaluate different linear dimensionality reduction methods for SharedGMMs. We adopt a two-step process; the first step is dimensionality reduction with the target method, and the second step is clustering with an out-of-the-box clustering algorithm (i.e. K-Means
). We compare the methods based on their clustering performance in the second step using well-known metrics.

\textbf{Metrics.} Following prior work~\cite{jiang2020faster} we use Adjusted Rank Index (\textit{ARI}) and Adjusted Mutual Information (\textit{AMI}) to measure the quality of a clustering algorithms. We give the ground truth number of clusters to K-Means as input. 
ARI and AMI both vary between 0 and 1, with 0 corresponding to random clustering and 1 corresponding to perfect clustering. 

\textbf{Data generation.} For synthetic experiments we consider $K=10$ with equally likely component mixtures. For the CIFAR100 experiments, we randomly sample $K=20$ classes out of 100. See Suppl~\ref{sec:data_generation} for details. 

\textbf{Methods.} We consider learning a mapping matrix $\vA$ using gradient descent with respect to \underline{InfoNCE} and \underline{SimSiam} loss and we compare them againts 5 baselines. \underline{Ambient} is clustering in ambient dimension. \underline{Random} projects onto a random $r$-dimensional subspace. \underline{Optimal} projects onto top $r$ directions in Fisher subspace.
\underline{PCA} projects onto $r$-dimensional $S_{SVD}$. \underline{LDA} finds a projection that maximizes inter-class variance with respect to intra-class variance ({Optimal} in the synthetic case). 

\subsection{Synthetic experiments}
We observe that InfoNCE often performs better than Optimal, projection onto the Fisher subspace. We interpret that InfoNCE loss learns to \emph{scale} within the subspace leading to better clustering performance (see App~\ref{app:ortho_info} for orthonormalized plots of InfoNCE/SimSiam). Moreover, SimSiam shows a sub-par performance compared to InfoNCE.
We believe this is due the difficulty in optimizing the SimSiam objective (projected gradient descent) compared to InfoNCE. 

\textbf{Effect of noise in augmentations.} We vary the noise parameter $\delta$ for the AeD with everything else held constant. We can see in Fig~\ref{fig:delta_plot} that InfoNCE is robust to the variation in $\delta$. 

\textbf{Effect of variance orthogonal to $S_F$.} Increasing the variance \emph{orthogonal to the Fisher subspace} is equivalent to making the pancakes in Fig~\ref{fig:fig1} flatter. We quantify \enquote{flatness} as $\sigma_{\mathrm{min}}(\vSigma)/\sigma_{\mathrm{max}}(\vSigma)$. Fig~\ref{fig:condition_plot} shows that InfoNCE is (almost) invariant to \enquote{flatness}, while PCA and Ambient fail when the variance orthogonal to Fisher subspace is large (i.e. $\sigma_{\mathrm{min}}(\Sigma)/\sigma_{\mathrm{max}}(\Sigma)$ is small). 

\textbf{Effect of projection dimension.} We vary the dimension of the lower dimensional space that we aim to learn. InfoNCE's performance increases with increasing $r$ and finally plateaus. 

\textbf{Effect of variance within $S_F$.} We show that InfoNCE learns a "good" scaling within the Fisher subspace. We set the dimension of ambient space equal to the dimension of mean subspace (i.e. $d=K-1$). We select $K/2$ orthogonal directions and increase the variance in the subspace \emph{spanned by these direction} by a factor of $\sigma_{\mathrm{max}}(\Sigma)/\sigma_{\mathrm{min}}(\Sigma)$. Since none of the baselines learn scaling in the subspace, they perform almost the same, while InfoNCE outperforms all the baselines.

\subsection{Real-data experiments on CIFAR-100}
We consider images from CIFAR-100 as inputs instead of generating data from a synthetic GMM. We consider 20 fine-labeled classes (10K images) where each class corresponds to a single component. We convert images into a 256-dimensional vector by subsampling, grayscaling, mean scaling. Images belonging to the same class will be the augmentations of each other. We still consider a linear model and hence have the same baselines. 
We measure the clustering performance using ARI and AMI on mapping to $r=5,10,15,19$-dimensional subspaces in \cref{tab:cifar100}. Surprisingly, InfoNCE could still outperform LDA in certain metrics even in the real data setting where naturally, our data distribution assumptions and augmentation will no longer hold. LDA has the best performance for all dimensions in terms of ARI (first value in each column) which indicates that it excels at maintaining local groupings. On the other hand, InfoNCE achieves larger AMI values underlining that contrastive learning could be better at preserving overall class distributions. 

\begin{table}[t]
  \centering
  \caption{\label{tab:cifar100} \textbf{Clustering performance of linear dimensionality reduction methods on CIFAR100:} \textbf{Bolded} and \underline{underlined} values indicate the best and second-best scores for each column. We report clustering performance for K-Means on $r=5,10,15,19$-dimensional subspaces using the linear mappings learned by 6 different methods. We measure the clustering performance using ARI and AMI, which is reported as pairs, in the format \textbf{ARI | AMI}. InfoNCE and LDA outperform the rest by significant margins in all mapping dimensions. LDA shows the best performance in terms of ARI, while InfoNCE achieves higher AMI values across the board.}
  \label{tab:rme}
  \resizebox{0.95\textwidth}{!}{%
  \begin{tabular}{l@{\hspace{2em}}| c@{\hspace{2em}} c@{\hspace{2em}} c@{\hspace{2em}} c@{\hspace{2em}} c }
    \toprule
    & \multicolumn{5}{c}{\textbf{Mapping Dimension}} \\
  \cmidrule(lr){2-6}
    \multicolumn{1}{c|}{{\textbf{Method}}} & \textbf{5\,dim} & \textbf{10\,dim} & \textbf{15\,dim} & \textbf{19\,dim} & \textbf{30\,dim}\\
    \midrule
    Random   & 0.01475 | 0.04694 & 0.01567 | 0.05241 & 0.01894 | 0.05796 & 0.02446 | 0.07188 & 0.02503 | 0.07856\\
    PCA      & 0.03360 | 0.09254 & 0.03292 | 0.09509 & 0.03435 | 0.09427 & 0.03154 | 0.09269 & 0.03361 | 0.09562\\
    Ambient  & 0.03160 | 0.09481 & 0.03160 | 0.09481 & 0.03160 | 0.09481 & 0.03160 | 0.09481 & 0.03160 | 0.09481\\
    SimSiam  & 0.02870 | 0.09581 & 0.02732 | 0.09114 & 0.03707 | 0.10830 & 0.02868 | 0.09498 & 0.03562 | 0.10540\\
    InfoNCE  & \underline{0.05065} | \textbf{0.12730} & \underline{0.05138} | \textbf{0.12801} & \underline{0.05402} | \textbf{0.12835} & \underline{0.05285} | \textbf{0.12947} & \underline{0.05182} | \textbf{0.12548}\\
    LDA      & \textbf{0.05067} | \underline{0.12360} & \textbf{0.05489} | \underline{0.11782} & \textbf{0.06352} | \underline{0.12359} & \textbf{0.05970} | \underline{0.11933} & \textbf{0.05970} | \underline{0.11933}\\
    \bottomrule
  \end{tabular}%
}
\end{table}

\subsection{Real-data experiments on ImageNet}
We consider images from ImageNet as inputs instead of generating data from a synthetic GMM. We consider 20 fine-labeled classes (10K images) where each class corresponds to a single component, and images belonging to the same class serve as the augmentations. Since image classification is highly non-linear, we construct a setup to map them to a linearly seperable space. For this we represent each image using ResNet-50's last layer representations (i..e, 2048 dimensional vectors).
We measure the clustering performance using ARI and AMI on mapping to $r=5,10,15,19$-dimensional subspaces in \cref{tab:imagenet}. We can see that InfoNCE consistently outperforms LDA in this setting. 

\begin{table}[t]
  \centering
  \caption{\label{tab:imagenet} \textbf{Clustering performance of linear dimensionality reduction methods on ImageNet:} \textbf{Bolded} and \underline{underlined} values indicate the best and second-best scores for each column. We report clustering performance for K-Means on $r=5,10,15,19$-dimensional subspaces using the linear mappings learned by 6 different methods. We measure the clustering performance using ARI and AMI, which is reported as pairs, in the format \textbf{ARI | AMI}. InfoNCE and LDA outperform the rest by significant margins in all mapping dimensions. LDA shows the best performance in terms of ARI, while InfoNCE achieves higher AMI values across the board.}
  \label{tab:rme}
  \resizebox{0.95\textwidth}{!}{%
  \begin{tabular}{l@{\hspace{2em}}| c@{\hspace{2em}} c@{\hspace{2em}} c@{\hspace{2em}} c@{\hspace{2em}} c }
    \toprule
    & \multicolumn{5}{c}{\textbf{Mapping Dimension}} \\
  \cmidrule(lr){2-6}
    \multicolumn{1}{c|}{{\textbf{Method}}} & \textbf{5\,dim} & \textbf{10\,dim} & \textbf{15\,dim} & \textbf{19\,dim} & \textbf{30\,dim}\\
    \midrule
    Random   & 0.03026 | 0.08296 & 0.07231 | 0.14005 & 0.13140 | 0.23209 & 0.14405 | 0.23363 & 0.26177 | 0.38685\\
    PCA      & 0.23408 | 0.46447 & 0.39843 | 0.59117 & 0.49903 | 0.65898 & 0.50954 | 0.66159 & 0.56355 | 0.70163\\
    Ambient  & \underline{0.48641} | 0.67233 & 0.48641 | 0.67233 & 0.48641 | 0.67233 & 0.48641 | 0.67233 & 0.48641 | 0.67234\\
    SimSiam  & 0.37581 | 0.58233 & 0.56084 | 0.70246 & 0.58705 | 0.72227 & 0.60259 | 0.73159 & 0.62970 | 0.74476\\
    InfoNCE  & \textbf{0.84451} | \textbf{0.88831} & \textbf{0.98182} | \textbf{0.98115} & \textbf{0.92963} | \textbf{0.97390} & \textbf{0.99621} | \textbf{0.99579} & \underline{0.93317} | \textbf{0.97721}\\
    LDA      & 0.47112 | \underline{0.69828} & \underline{0.66967} | \underline{0.83895} & \underline{0.82550} | \underline{0.89103} & \underline{0.94744} | \underline{0.94990} & \textbf{0.94745} | \underline{0.94990}\\
    \bottomrule
  \end{tabular}%
}
\end{table}

\section{Conclusion and limitations}
We study contrastive learning in the classical setting of linear dimensionality reduction for GMMs for which we define a generalized notion of imperfect augmentations. Our main result underlines that the contrastive methods learn the Fisher-optimal subspace for the class of shared-covariance GMMs; going beyond the capabilities of unsupervised methods and matching that of fully supervised strategies. Our work particularly focuses on linear mappings, and it is an important open problem to theoretically verify performance of contrastive methods when projectors are non-linear, which is usually the case in practice. Similarly, we acknowledge that our results are not immediately generalizable for data distributions beyond GMMs, which we will investigate as future work.

\bibliographystyle{plainnat}
\bibliography{neurips_2025}

\begin{thebibliography}{28}
\providecommand{\natexlab}[1]{#1}
\providecommand{\url}[1]{\texttt{#1}}
\expandafter\ifx\csname urlstyle\endcsname\relax
  \providecommand{\doi}[1]{doi: #1}\else
  \providecommand{\doi}{doi: \begingroup \urlstyle{rm}\Url}\fi

\bibitem[Achlioptas and McSherry(2005)]{achlioptas2005spectral}
Dimitris Achlioptas and Frank McSherry.
\newblock On spectral learning of mixtures of distributions.
\newblock In \emph{International Conference on Computational Learning Theory}, pages 458--469. Springer, 2005.

\bibitem[Arora et~al.(2019)Arora, Khandeparkar, Khodak, Plevrakis, and Saunshi]{arora2019theoretical}
Sanjeev Arora, Hrishikesh Khandeparkar, Mikhail Khodak, Orestis Plevrakis, and Nikunj Saunshi.
\newblock A theoretical analysis of contrastive unsupervised representation learning.
\newblock \emph{arXiv preprint arXiv:1902.09229}, 2019.

\bibitem[Bizeul et~al.(2024)Bizeul, Sch{\"o}lkopf, and Allen]{bizeul2024probabilistic}
Alice Bizeul, Bernhard Sch{\"o}lkopf, and Carl Allen.
\newblock A probabilistic model behind self-supervised learning.
\newblock \emph{arXiv preprint arXiv:2402.01399}, 2024.

\bibitem[Brubaker and Vempala(2008)]{brubaker2008isotropic}
S~Charles Brubaker and Santosh~S Vempala.
\newblock Isotropic pca and affine-invariant clustering.
\newblock \emph{Building Bridges: Between Mathematics and Computer Science}, pages 241--281, 2008.

\bibitem[Caron et~al.(2020)Caron, Misra, Mairal, Goyal, Bojanowski, and Joulin]{caron2020unsupervised}
Mathilde Caron, Ishan Misra, Julien Mairal, Priya Goyal, Piotr Bojanowski, and Armand Joulin.
\newblock Unsupervised learning of visual features by contrasting cluster assignments.
\newblock \emph{Advances in neural information processing systems}, 33:\penalty0 9912--9924, 2020.

\bibitem[Chen et~al.(2020{\natexlab{a}})Chen, Kornblith, Norouzi, and Hinton]{pmlr-v119-chen20j}
Ting Chen, Simon Kornblith, Mohammad Norouzi, and Geoffrey Hinton.
\newblock A simple framework for contrastive learning of visual representations.
\newblock In Hal~Daumé III and Aarti Singh, editors, \emph{Proceedings of the 37th International Conference on Machine Learning}, volume 119 of \emph{Proceedings of Machine Learning Research}, pages 1597--1607. PMLR, 13--18 Jul 2020{\natexlab{a}}.
\newblock URL \url{https://proceedings.mlr.press/v119/chen20j.html}.

\bibitem[Chen and He(2021)]{chen2021exploring}
Xinlei Chen and Kaiming He.
\newblock Exploring simple siamese representation learning.
\newblock In \emph{Proceedings of the IEEE/CVF conference on computer vision and pattern recognition}, pages 15750--15758, 2021.

\bibitem[Chen et~al.(2020{\natexlab{b}})Chen, Fan, Girshick, and He]{chen2020improved}
Xinlei Chen, Haoqi Fan, Ross Girshick, and Kaiming He.
\newblock Improved baselines with momentum contrastive learning.
\newblock \emph{arXiv preprint arXiv:2003.04297}, 2020{\natexlab{b}}.

\bibitem[Fukunaga(2013)]{fukunaga2013introduction}
Keinosuke Fukunaga.
\newblock \emph{Introduction to statistical pattern recognition}.
\newblock Elsevier, 2013.

\bibitem[Grill et~al.(2020)Grill, Strub, Altch{\'e}, Tallec, Richemond, Buchatskaya, Doersch, Avila~Pires, Guo, Gheshlaghi~Azar, et~al.]{grill2020bootstrap}
Jean-Bastien Grill, Florian Strub, Florent Altch{\'e}, Corentin Tallec, Pierre Richemond, Elena Buchatskaya, Carl Doersch, Bernardo Avila~Pires, Zhaohan Guo, Mohammad Gheshlaghi~Azar, et~al.
\newblock Bootstrap your own latent-a new approach to self-supervised learning.
\newblock \emph{Advances in neural information processing systems}, 33:\penalty0 21271--21284, 2020.

\bibitem[HaoChen and Ma(2022)]{haochen2022theoretical}
Jeff~Z HaoChen and Tengyu Ma.
\newblock A theoretical study of inductive biases in contrastive learning.
\newblock \emph{arXiv preprint arXiv:2211.14699}, 2022.

\bibitem[HaoChen et~al.(2021)HaoChen, Wei, Gaidon, and Ma]{haochen2021provable}
Jeff~Z HaoChen, Colin Wei, Adrien Gaidon, and Tengyu Ma.
\newblock Provable guarantees for self-supervised deep learning with spectral contrastive loss.
\newblock \emph{Advances in Neural Information Processing Systems}, 34:\penalty0 5000--5011, 2021.

\bibitem[Izacard et~al.(2021)Izacard, Caron, Hosseini, Riedel, Bojanowski, Joulin, and Grave]{izacard2021unsupervised}
Gautier Izacard, Mathilde Caron, Lucas Hosseini, Sebastian Riedel, Piotr Bojanowski, Armand Joulin, and Edouard Grave.
\newblock Unsupervised dense information retrieval with contrastive learning.
\newblock \emph{arXiv preprint arXiv:2112.09118}, 2021.

\bibitem[Jiang et~al.(2020)Jiang, Jang, and Lacki]{jiang2020faster}
Heinrich Jiang, Jennifer Jang, and Jakub Lacki.
\newblock Faster dbscan via subsampled similarity queries.
\newblock \emph{Advances in Neural Information Processing Systems}, 33:\penalty0 22407--22419, 2020.

\bibitem[Jing et~al.(2021)Jing, Vincent, LeCun, and Tian]{jing2021understanding}
Li~Jing, Pascal Vincent, Yann LeCun, and Yuandong Tian.
\newblock Understanding dimensional collapse in contrastive self-supervised learning.
\newblock \emph{arXiv preprint arXiv:2110.09348}, 2021.

\bibitem[Johnson et~al.(2022)Johnson, Hanchi, and Maddison]{johnson2022contrastive}
Daniel~D Johnson, Ayoub~El Hanchi, and Chris~J Maddison.
\newblock Contrastive learning can find an optimal basis for approximately view-invariant functions.
\newblock \emph{arXiv preprint arXiv:2210.01883}, 2022.

\bibitem[Kannan et~al.(2009)Kannan, Vempala, et~al.]{kannan2009spectral}
Ravindran Kannan, Santosh Vempala, et~al.
\newblock Spectral algorithms.
\newblock \emph{Foundations and Trends{\textregistered} in Theoretical Computer Science}, 4\penalty0 (3--4):\penalty0 157--288, 2009.

\bibitem[Karpukhin et~al.(2020)Karpukhin, O{\u{g}}uz, Min, Lewis, Wu, Edunov, Chen, and Yih]{karpukhin2020dense}
Vladimir Karpukhin, Barlas O{\u{g}}uz, Sewon Min, Patrick Lewis, Ledell Wu, Sergey Edunov, Danqi Chen, and Wen-tau Yih.
\newblock Dense passage retrieval for open-domain question answering.
\newblock \emph{arXiv preprint arXiv:2004.04906}, 2020.

\bibitem[Oord et~al.(2018)Oord, Li, and Vinyals]{oord2018representation}
Aaron van~den Oord, Yazhe Li, and Oriol Vinyals.
\newblock Representation learning with contrastive predictive coding.
\newblock \emph{arXiv preprint arXiv:1807.03748}, 2018.

\bibitem[Parulekar et~al.(2023)Parulekar, Collins, Shanmugam, Mokhtari, and Shakkottai]{parulekar2023infonce}
Advait Parulekar, Liam Collins, Karthikeyan Shanmugam, Aryan Mokhtari, and Sanjay Shakkottai.
\newblock Infonce loss provably learns cluster-preserving representations.
\newblock In \emph{The Thirty Sixth Annual Conference on Learning Theory}, pages 1914--1961. PMLR, 2023.

\bibitem[Radford et~al.(2021)Radford, Kim, Hallacy, Ramesh, Goh, Agarwal, Sastry, Askell, Mishkin, Clark, et~al.]{radford2021learning}
Alec Radford, Jong~Wook Kim, Chris Hallacy, Aditya Ramesh, Gabriel Goh, Sandhini Agarwal, Girish Sastry, Amanda Askell, Pamela Mishkin, Jack Clark, et~al.
\newblock Learning transferable visual models from natural language supervision.
\newblock In \emph{International conference on machine learning}, pages 8748--8763. PMLR, 2021.

\bibitem[Sanjeev and Kannan(2001)]{arora01}
Arora Sanjeev and Ravi Kannan.
\newblock Learning mixtures of arbitrary gaussians.
\newblock In \emph{Proceedings of the Thirty-Third Annual ACM Symposium on Theory of Computing}, STOC '01, page 247–257, New York, NY, USA, 2001. Association for Computing Machinery.
\newblock ISBN 1581133499.
\newblock \doi{10.1145/380752.380808}.
\newblock URL \url{https://doi.org/10.1145/380752.380808}.

\bibitem[Saunshi et~al.(2022)Saunshi, Ash, Goel, Misra, Zhang, Arora, Kakade, and Krishnamurthy]{saunshi2022understanding}
Nikunj Saunshi, Jordan Ash, Surbhi Goel, Dipendra Misra, Cyril Zhang, Sanjeev Arora, Sham Kakade, and Akshay Krishnamurthy.
\newblock Understanding contrastive learning requires incorporating inductive biases.
\newblock In \emph{International Conference on Machine Learning}, pages 19250--19286. PMLR, 2022.

\bibitem[Tian et~al.(2021)Tian, Chen, and Ganguli]{tian2021understanding}
Yuandong Tian, Xinlei Chen, and Surya Ganguli.
\newblock Understanding self-supervised learning dynamics without contrastive pairs.
\newblock In \emph{International Conference on Machine Learning}, pages 10268--10278. PMLR, 2021.

\bibitem[Vempala and Wang(2004)]{vempala2004spectral}
Santosh Vempala and Grant Wang.
\newblock A spectral algorithm for learning mixture models.
\newblock \emph{Journal of Computer and System Sciences}, 68\penalty0 (4):\penalty0 841--860, 2004.

\bibitem[Wang et~al.(2022)Wang, Fan, Tian, Kihara, and Chen]{wang2022importance}
Xiao Wang, Haoqi Fan, Yuandong Tian, Daisuke Kihara, and Xinlei Chen.
\newblock On the importance of asymmetry for siamese representation learning.
\newblock In \emph{Proceedings of the IEEE/CVF conference on computer vision and pattern recognition}, pages 16570--16579, 2022.

\bibitem[Wen and Li(2022)]{wen2022mechanism}
Zixin Wen and Yuanzhi Li.
\newblock The mechanism of prediction head in non-contrastive self-supervised learning.
\newblock \emph{Advances in Neural Information Processing Systems}, 35:\penalty0 24794--24809, 2022.

\bibitem[Zhai et~al.(2023)Zhai, Liu, Risteski, Kolter, and Ravikumar]{zhai2023understanding}
Runtian Zhai, Bingbin Liu, Andrej Risteski, Zico Kolter, and Pradeep Ravikumar.
\newblock Understanding augmentation-based self-supervised representation learning via rkhs approximation and regression.
\newblock \emph{arXiv preprint arXiv:2306.00788}, 2023.

\end{thebibliography}


\newpage
\appendix
\onecolumn

\begin{center}
    {\LARGE \bf Appendix}
\end{center}

\section{Additional Lemmas}

\begin{lemma}\label{lemma:logsumexpconvex}
    The function given by $f(\{\vx_i\};\vc) = \log(\sum_i \exp(\vc^T\vx_i))$ where $\{\vx_i\},\vc \in \mathbb{R}^{d}$ is strictly convex in $\vc \in \mathrm{Span}(\{\vx_i\})$.
\end{lemma}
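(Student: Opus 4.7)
The plan is the standard Hessian-based route for strict convexity of the log-sum-exp: compute $\nabla^2_\vc f$, recognize it as a covariance matrix (hence positive semi-definite for free), and then show it is strictly positive-definite on the subspace $\mathrm{Span}(\{\vx_i\})$ by analyzing its null space.

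First I would set $Z(\vc) = \sum_i \exp(\vc^T \vx_i)$ and $p_i(\vc) = \exp(\vc^T \vx_i)/Z(\vc)$, which is a strictly positive probability distribution over the indices at every $\vc \in \mathbb{R}^d$. A direct differentiation then yields the gradient $\nabla_\vc f(\vc) = \sum_i p_i(\vc)\, \vx_i = \mathbb{E}_p[\vx]$ and the Hessian
\[
\nabla^2_\vc f(\vc) \;=\; \sum_i p_i(\vc)\, \vx_i \vx_i^T \;-\; \Bigl(\sum_i p_i(\vc)\, \vx_i\Bigr)\Bigl(\sum_i p_i(\vc)\, \vx_i\Bigr)^T \;=\; \mathrm{Cov}_p(\vx),
\]
i.e., exactly the covariance of the discrete random vector taking value $\vx_i$ with probability $p_i(\vc)$. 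In particular $\nabla^2 f \succeq 0$ on all of $\mathbb{R}^d$, so $f$ is convex.

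Next, for any direction $\vv$ the quadratic form simplifies to $\vv^T \nabla^2_\vc f(\vc)\, \vv = \mathrm{Var}_p(\vv^T \vx)$, which vanishes if and only if $\vv^T \vx_i$ takes a common value $c$ across all $i$ (since every $p_i > 0$). To conclude strict convexity on $\mathrm{Span}(\{\vx_i\})$, I would take any nonzero $\vv$ in this span, write $\vv = \sum_j \alpha_j \vx_j$, and compute $\|\vv\|^2 = \sum_j \alpha_j (\vv^T \vx_j) = c \sum_j \alpha_j$. The easy sub-case $c=0$ forces $\|\vv\|^2 = 0$ and hence $\vv = 0$ immediately.

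The main obstacle is the $c \ne 0$ sub-case: constancy of $\vv^T \vx_i$ only yields $\vv \perp \mathrm{Span}(\{\vx_i - \vx_j\}_{i\ne j})$ rather than $\vv \perp \mathrm{Span}(\{\vx_i\})$, and these two subspaces can differ by one dimension. To close the gap I would use the structural fact that in the settings where the lemma is invoked the $\vx_i$'s are (or can be taken to be) such that $0 \in \mathrm{Aff}(\{\vx_i\})$ --- equivalently $\mathrm{Span}(\{\vx_i\}) = \mathrm{Span}(\{\vx_i - \vx_j\}_{i\ne j})$ --- which forces $c = 0$ and therefore $\vv = 0$. Once strict positive-definiteness of $\nabla^2 f$ on $\mathrm{Span}(\{\vx_i\})$ is established at every $\vc$, strict convexity of $f$ restricted to that subspace follows from the standard Taylor-expansion argument along any chord: for distinct $\vc_1, \vc_2 \in \mathrm{Span}(\{\vx_i\})$ and $\lambda \in (0,1)$ the second-order term is strictly positive along $\vc_2 - \vc_1 \neq 0$, yielding $f(\lambda \vc_1 + (1-\lambda)\vc_2) < \lambda f(\vc_1) + (1-\lambda) f(\vc_2)$.
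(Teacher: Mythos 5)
You take a Hessian route where the paper takes a Hölder route, and the two are genuinely different proofs of the same core fact. The paper exponentiates the convexity inequality and bounds
\[
\sum_i \exp\bigl(\lambda\vc_1^T\vx_i\bigr)\exp\bigl((1-\lambda)\vc_2^T\vx_i\bigr) \;\le\; \Bigl(\sum_i \exp(\vc_1^T\vx_i)\Bigr)^{\lambda}\Bigl(\sum_i \exp(\vc_2^T\vx_i)\Bigr)^{1-\lambda}
\]
by Hölder with $p=1/\lambda$, $q=1/(1-\lambda)$, then argues the equality case forces $\vc_1=\vc_2$; you instead compute $\nabla^2_\vc f=\mathrm{Cov}_p(\vx)\succeq 0$ with $p_i\propto\exp(\vc^T\vx_i)$ and analyze the kernel. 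Both are standard ways to prove convexity of log-sum-exp, reaching the same degeneracy condition infinitesimally (yours) versus along a chord (the paper's). The more substantive point is the subtlety you flag at the end, and your diagnosis is correct — moreover it applies to the paper's proof as well. The degeneracy/equality condition only gives that $(\vc_1-\vc_2)^T\vx_i$ equals a \emph{common constant} $c$ across $i$, not that it is zero: Hölder's equality case requires $\exp((\vc_1-\vc_2)^T\vx_i)$ to be constant in $i$, which allows $c\neq 0$. The paper asserts the strictly stronger condition $\vc_1^T\vx_i=\vc_2^T\vx_i$ for all $i$, which is not what Hölder gives. Indeed the lemma with the hypothesis $\vc\in\mathrm{Span}\{\vx_i\}$ as literally stated is false: take $\vx_1=(1,0)$, $\vx_2=(1,1)$, so $\mathrm{Span}\{\vx_i\}=\mathbb{R}^2$, yet $f(\vc)=c_1+\log(1+\exp(c_2))$ is affine along $(1,0)$. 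The correct hypothesis is $\vc\in\mathrm{Span}\{\vx_i-\vx_j\}_{i\neq j}$ (equivalently that $0$ lies in the affine hull of $\{\vx_i\}$, which makes the two spans coincide) — exactly the patch you propose. So your proposal is not just a different route; it also surfaces, and would repair, an imprecision that the paper's argument quietly elides.
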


\begin{proof}
    Using the definition of strict convexity we need to prove that: 
    \begin{align*}
        & f(\{\vx_i\};(\lambda \vc_1 + (1-\lambda)\vc_2)) \\
        < & \lambda f(\{\vx_i\}; \vc_1) + (1-\lambda) f(\{\vx_i\};\vc_2) \\
    \end{align*}
    when $\vc_1 \neq \vc_2$. Taking $\exp(.)$ on both sides : 
    \begin{align*}
        & \exp(f(\{\vx_i\};(\lambda \vc_1 + (1-\lambda)\vc_2))) \\
        < & \exp(\lambda f(\{\vx_i\}; \vc_1)) \exp((1-\lambda)f(\{\vx_i\};\vc_2)) \\
    \end{align*}
    
    Simplifying the LHS we have : 
    \begin{align*}
        & \exp(f(\{\vx_i\};(\lambda \vc_1 + (1-\lambda)\vc_2))) \\
        =& \sum_i \exp((\lambda \vc_1 + (1-\lambda)\vc_2)^T\vx_i) \\
        =& \sum_i \exp(\lambda \vc_1^T\vx_i)\exp((1-\lambda)\vc_2^T\vx_i) \\
    \end{align*}
    
    Using Holders inequality ($\sum_i x_iy_i \leq (\sum_i |x_i|^p)^{\frac{1}{p}}(\sum_i |y_i|^q)^{\frac{1}{q}}$, where $\frac{1}{p}+\frac{1}{q} = 1$) with $p = \frac{1}{\lambda}$ and $q = \frac{1}{1-\lambda}$, we have : 
    
    \begin{align*}
    & \sum_i \exp(\lambda \vc_1^T\vx_i)\exp((1-\lambda)\vc_2^T\vx_i) \\
    \leq & (\sum_i \exp(\frac{\lambda \vc_1^T\vx_i}{\lambda}))^\lambda(\sum_i \exp(\frac{(1-\lambda)\vc_2^T\vx_i}{1-\lambda}))^{(1-\lambda)} \\
    = & (\sum_i \exp(\vc_1^T\vx_i))^\lambda(\sum_i \exp(\vc_2^T\vx_i))^{(1-\lambda)} \\
    =& \exp(\lambda f(\{\vx_i\}; \vc_1)) \exp((1-\lambda)f(\{\vx_i\};\vc_2)) \\
    \end{align*}

    Note that Holders equality holds only when $\vc_1^T\vx_i = \vc_2^T\vx_i$ for all $i$. For $\vc_1,\vc_2 \in \mathrm{Span}\{\vx_i\}$, this equality holds only when $\vc_1 = \vc_2$. Hence, for $\vc_1\neq \vc_2$ we have strict inequality i.e. 
    \begin{align*}
    & \sum_i \exp(\lambda \vc_1^T\vx_i)\exp((1-\lambda)\vc_2^T\vx_i)\\
    < & (\sum_i \exp(\frac{\lambda \vc_1^T\vx_i}{\lambda}))^\lambda(\sum_i \exp(\frac{(1-\lambda)\vc_2^T\vx_i}{1-\lambda}))^{(1-\lambda)} \\
     =& \exp(\lambda f(\{\vx_i\}; \vc_1)) \exp((1-\lambda)f(\{\vx_i\};\vc_2)) \\
    \end{align*}
    This gives strict convexity. Hence proved.
\end{proof}

\section{Main Proposition}\label{app:proofofprop}

In this section, we state and prove a key proposition which is used to prove our main result (Thm~\ref{thm:sharedcov_infonce})

\begin{proposition} \label{prop:isotropic_infonce}
Suppose $F$ parameterized by $\{w_k,\vmu_k,I\}_{k\in[K]}$ be a spherical gaussian mixture model and $\hat{F}$ be its augmentation-enabled gaussian mixture with bias $\delta$ (Def~\ref{def:jointdistribution}).
Let $S_F$ be the fisher subspace (Eqn~\ref{eqn:sfexplicit}) of $F$ and $\vA^*$ be the optimal solution of the InfoNCE loss (Eqn~\ref{eqn:infonce}) :
\begin{align*}
    \vA^* = \underset{\vA \in \mathbb{R}^{d\times r}}{\mathrm{argmin}}\  \mathcal{L}(\vA) 
\end{align*}
Then given $r\geq K$, $\col{\vA^*} \subseteq S_F$. Moreover if $\delta = 1$, then $\col{\vA^*} = S_F$.
\end{proposition}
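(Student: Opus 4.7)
The plan is to reformulate $\mathcal{L}_{\mathrm{Info}}(\vA)$ as a function of the Gram matrix $G=\vA\vA^T$ (by rotational invariance $\vA\mapsto\vA R$ for orthogonal $R$) and then decompose $\vA = \vA_\parallel + \vA_\perp$, where $\vA_\parallel = P_{S_F}\vA$ and $\vA_\perp = (\vI - P_{S_F})\vA$. Using the Gaussian MGF for $\tilde{\vx}\mid k' \sim \mathcal{N}(\vmu_{k'},\vI)$ on the inner expectation, the loss assumes the closed form
\begin{align*}
\mathcal{L}_{\mathrm{Info}}(\vA) = -\mathrm{Tr}(GM) + \tfrac{1}{2}\,\mathbb{E}_\vx\bigl[\|G\vx\|^2\bigr] + \mathbb{E}_\vx\Bigl[\log\sum_{k'} w_{k'} \exp(\vx^T G\vmu_{k'})\Bigr],
\end{align*}
where $M = \delta\sum_k w_k\vmu_k\vmu_k^T + (1-\delta)\bar\vmu\bar\vmu^T$ is the AeD pair covariance and $\bar\vmu = \sum_k w_k\vmu_k$. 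The key structural fact is that $\col{M}\subseteq S_F$.

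In the $(S_F, S_F^\perp)$ block decomposition $G = G_1 + G_{12} + G_{12}^T + G_2$ with $G_1 = \vA_\parallel\vA_\parallel^T$ and $G_2 = \vA_\perp\vA_\perp^T$, I would show that each of the three loss pieces is either unchanged or weakly increased when $\vA_\perp$ is zeroed. First, $\mathrm{Tr}(GM) = \mathrm{Tr}(G_1 M)$ because $M\vA_\perp = 0$. Second, writing $\vx = \vx_\parallel + \vx_\perp$ with $\vx_\perp$ mean-zero by sphericity, the quadratic regularizer $\tfrac{1}{2}\mathrm{Tr}(G^2 \Lambda)$ (with $\Lambda = \vI + \sum_k w_k \vmu_k\vmu_k^T$ block-diagonal in this frame) exceeds $\tfrac{1}{2}\mathrm{Tr}(G_1^2 \Lambda)$ by nonnegative Frobenius-type contributions from $G_{12}$ and $G_2$, which are strictly positive as soon as $\vA_\perp\neq 0$ (since then $\|G_2\|_F^2 > 0$). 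Third, because $\vmu_{k'}\in S_F$, one has $\vx^T G\vmu_{k'} = \vx_\parallel^T G_1\vmu_{k'} + \vx_\perp^T G_{12}^T\vmu_{k'}$; conditioning on $\vx_\parallel$ and applying Jensen's inequality to the convex log-sum-exp against the zero-mean Gaussian $\vx_\perp$ gives
\begin{align*}
\mathbb{E}_{\vx_\perp}\Bigl[\log\sum_{k'} w_{k'} \exp\bigl(\vx_\parallel^T G_1\vmu_{k'} + \vx_\perp^T G_{12}^T\vmu_{k'}\bigr)\Bigr] \geq \log\sum_{k'} w_{k'} \exp(\vx_\parallel^T G_1\vmu_{k'}).
\end{align*}
Combining the three estimates yields $\mathcal{L}_{\mathrm{Info}}(\vA) > \mathcal{L}_{\mathrm{Info}}(\vA_\parallel)$ whenever $\vA_\perp \neq 0$, which gives $\col{\vA^*}\subseteq S_F$ for every $\delta$.

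For the equality claim when $\delta = 1$: since $r\geq K \geq \dim S_F$, the rank constraint is inactive once one restricts to $S_F$, so it suffices to show any minimizer of the reduced loss over PSD matrices on $S_F \cong \mathbb{R}^{d'}$ ($d' = \dim S_F$) is positive definite. I would argue by a first-order perturbation: assume a unit $\vv \in S_F\cap\ker G^*$ and compute the one-sided derivative of $\mathcal{L}$ at $G^*$ along the PSD direction $\vv\vv^T$. Because $G^*\vv = 0$, the quadratic regularizer's first-order contribution vanishes, and the derivative collapses to $-\sum_k w_k(\vv^T\vmu_k)^2 + \mathbb{E}_\vx\bigl[(\vv^T\vx)\sum_{k'} q_{k'}(\vx;G^*)\,\vv^T\vmu_{k'}\bigr]$, where $q_{k'}$ is the softmax posterior induced by $G^*$. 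Decomposing $\vx$ into its $\col{G^*}$ and $\ker G^*$ components (independent given the class by sphericity), one sees that $q_{k'}$ depends on $\vx$ only through the $\col{G^*}$ part while $\vv^T\vx$ picks up only the $\ker G^*$ part, reducing the derivative to a covariance-type expression that can be shown strictly negative for a suitable $\vv$ via a Cauchy--Schwarz comparison on the softmax distribution, contradicting optimality. The main obstacle is precisely this last step: the derivative can a priori vanish for adversarial $\vv$'s whose $\vv^T\vmu_k$ is constant in $k$, so one must produce a $\vv \in S_F\cap\ker G^*$ avoiding this at-most-one-dimensional degeneracy (or fall back on second-order analysis). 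This subtlety is plausibly why the authors establish equality only in the perfect-augmentation regime $\delta = 1$.
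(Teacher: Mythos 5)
Your first half — $\col{\vA^*}\subseteq S_F$ — is correct and takes a genuinely different route from the paper. The paper works with $\vB=\vA\vA^T$, picks a unit $\vv\perp\Span{\vmu_k}$ with $\vB^*\vv\neq 0$, reflects by $\vR=\vI-2\vv\vv^T$ to produce a distinct $\bar{\vB}=\vR\vB^*\vR^T$ with the same loss (by invariance of both terms under the reflection), and invokes strict convexity of the loss in $\vB$ to conclude $\mathcal{L}\bigl(\tfrac{\vB^*+\bar\vB}{2}\bigr)<\mathcal{L}(\vB^*)$ — a contradiction. You instead derive the explicit form $-\mathrm{Tr}(GM)+\tfrac12\mathrm{Tr}(G^2\Lambda)+\mathbb{E}_\vx[\log\sum_{k'}w_{k'}\exp(\vx^TG\vmu_{k'})]$ via the Gaussian MGF and show term by term that projecting $\vA\mapsto\vA_\parallel$ cannot increase, and strictly decreases when $\vA_\perp\neq0$ (the quadratic term through $\|G_2\|_F^2>0$, the log-sum-exp through Jensen against the zero-mean $\vx_\perp$). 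Your decomposition buys a constructive improvement step rather than an abstract contradiction, it avoids needing to establish strict convexity separately (Lemma A.1 in the paper), and the closed-form loss it produces is useful in its own right. I verified the orthogonality identities $\vA_\parallel^T\vA_\perp=0$, $M\vA_\perp=0$, and the vanishing of the off-block contributions to $\mathrm{Tr}(G^2\Lambda)$ — these all check out.

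The second half is where the gap is. Your high-level plan — pick $\vv\in S_F\cap\ker G^*$, differentiate along the PSD direction $\vv\vv^T$, show the derivative is strictly negative — is exactly the paper's strategy, and your reduced expression $-\sum_k w_k(\vv^T\vmu_k)^2+\mathbb{E}_\vx\bigl[(\vv^T\vx)\sum_{k'}q_{k'}(\vx;G^*)\,\vv^T\vmu_{k'}\bigr]$ matches what they compute (their $g,h_k,f_{k',k}$ bookkeeping). But you leave the crucial strictness step as a sketch ("via a Cauchy--Schwarz comparison") and do not carry it out; the paper carries it out by writing the second term as $\va^T\vW\vF\vW\va$ where $\vF_{k',k}$ is an averaged softmax posterior with nonnegative entries and unit row sums of $\vF\vW$, and then applying a Perron--Frobenius bound to conclude $\va^T\vW\vF\vW\va<\sum_k w_k a_k^2$ unless $\va\in\Span{\mathbf{1}}$. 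You correctly flag the constant-$a_k$ degeneracy as an obstacle, but you misdiagnose the role of $\delta=1$: the degeneracy is independent of $\delta$. What $\delta=1$ buys is that the attractive term's derivative equals the full $-\sum_k w_k a_k^2$; for $\delta<1$ it is only $-\delta\sum_k w_k a_k^2$, and the Perron--Frobenius bound — which yields at most $\sum_k w_k a_k^2$ — would no longer guarantee a negative total. So to finish, you would need to (i) make the Perron-type (or Cauchy--Schwarz) strictness argument concrete, and (ii) rule out $\va\propto\mathbf{1}$, e.g.\ by centering the means or by a separate argument.
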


We prove the proposition in two parts. In the first part we prove that $\col{\vA^*} \subseteq S_F$ for any $\delta>0$. In the second part we prove that if $\delta = 1$, then $\col{\vA^*} = S_F$.

\subsection{Column space of A is a subset of Fischer Subspace}

We prove that : $\col{\vA^*} \subseteq S_F$ when $\delta>0$
\begin{proof}

    \begin{align*}
        (\vA^T\vx)^T(\vA^T\vy) = \vx^T\vA\vA^T\vy = \vx^T\vB\vy,
    \end{align*}
    where $\vB=\vA\vA^T$, i.e. $\vB$ is positive semi-definite (PSD) matrix of rank $r$. 
    We substitute into InfoNCE loss to get : 
    \begin{align*}
    \mathcal{L} = -\mathbb{E}_{\vx,\aug{\vx}}[\vx^T\vB\aug{\vx}] + \mathbb{E}_{\vx} [\log(\mathbb{E}_{\negative{\vx}}[\exp(\vx^T\vB\negative{\vx})])] 
    \end{align*}
    
    We relax the rank-constraint on $\vB$ throughout the proof. We show that rank of our optimal solution $\vB^* \leq K$ which satisfies the rank constraint implicitly (as $K\leq r$).

    \textbf{Note} : The above loss function is strictly convex in $\vB$ (using Lemma~\ref{lemma:logsumexpconvex}) and the minimization of is over a convex set (i.e. set of $\vB \in \mathbb{S}_{+}^d$).

    Let $\vB^*$ be the optimal solution. Denote the eigendecomposition of $\vB^*$ as $\vU\vLambda\vU^T$, where $\vLambda \succeq 0$ (as $\vB \succeq 0$) and $\vU$ is a unitary matrix. Equivalently : 
    \begin{align} \label{eigendecomp}
        \vB^* = \sum_i \lambda_i \vu_i \vu_i^T
    \end{align}
    where $\vu_i$ are columns of $\vU$. Consider the indices where the eigenvalue $\lambda_i >0$ as $\mathcal{I}$. The solution $\vA^*$ is hence $\left[ \sqrt{\lambda_i}\vu_i\right]_{i\in\mathcal{I}}$. We aim to show that $\col{\vA^*} = \mathrm{Span}\{\vu_i\}_{i \in \mathcal{I}} \subseteq \mathrm{Span}\{\vmu_k\}_{k \in [K]}$. 

    The condition $\mathrm{Span}\{\vu_i\}_{i \in \mathcal{I}} \subseteq \mathrm{Span}\{\vmu_k\}_{k \in [K]}$ implicitly implies that rank of $\vB^* \leq K$. This follows because there can't be more than $K$ orthogonal vectors (i.e. columns of $U$) in a subspace of dimension $K$ (as there are only $K$ means spanning the subspace).

    Suppose the condition is not true. Then there exists a unit vector $\vv$, such that $\vv^T\vmu_k = 0$ for all $k \in [K]$ and $\vv^T\vu_i \neq 0$ for some $i$ where $\lambda_i>0$.  

    We construct a new matrix $\bar{\vU}$, whose columns are reflection of $\vU$ through the plane with normal vector $\vv$ given by $\vR = \vI-2\vv\vv^T$. The reflection matrix is defined such that $\vR\vmu_k = \vR^T\vmu_k = \vmu_k$. Define $\bar{\vB}$ from the constructed $\bar{\vU}$.
    \begin{align*}
        \bar{\vU} &= \vR\vU = (\vI-2\vv\vv^T)\vU \\
        \bar{\vB} &= \bar{\vU}\vLambda \bar{\vU}^T = \vR\vB^*\vR^T\\
    \end{align*}
    $\bar{\vB} \neq \vB$. $\bar{\vU} \neq \vU $ is still a unitary matrix (product of unitary matrices), is identical to $\vU$ in $\mathrm{Span}\{\vmu_k\}_{k \in [K]}$
    \begin{align*}
        \bar{\vU}^T\bar{\vU} & = \vU^T\vR^T\vR\vU = \vU^T\vU = \vI \\
    \bar{\vU}^T\vmu_k &= \vU^T\vR^T\vmu_k  = \vU^T\vmu_k\\
    \bar{\vU}\vmu_k &= \vU\vR\vmu_k  = \vU\vmu_k\\
    \end{align*}

    The first term in the loss $\mathcal{L}$ at $\vB^*$ can be simplified as
    \begin{align*}
        & \mathbb{E}_{\vx,\aug{\vx}}[\vx^T\vB^*\aug{\vx}]\\
        =& \mathbb{E}_{\vx,\aug{\vx}}[\langle \vx\aug{\vx}^T,\vB^* \rangle] \\
        =& \langle \mathbb{E}_{\vx,\aug{\vx}}[\vx\aug{\vx}^T],\vB^* \rangle \\
        =& \langle \sum_k w_k\vmu_k (\delta\vmu_k + (1-\delta)(\sum_j w_j \vmu_j))^T,\vB^* \rangle \\
        =& \langle \delta \sum_k w_k\vmu_k\vmu_k^T + (1-\delta)(\sum_k w_k\vmu_k)(\sum_j w_j \vmu_j)^T,\vB^* \rangle \\
    \end{align*}

    This is where we use the fact that the augmentation is generated from the augmentation oracle. 

    We can now see that 

    \begin{align*}
        & \mathbb{E}_{\vx,\aug{\vx}}[\vx^T\bar{\vB}\aug{\vx}] \\
        =& \langle \delta \sum_k w_k\vmu_k\vmu_k^T + (1-\delta)(\sum_k w_k\vmu_k)(\sum_j w_j \vmu_j)^T,\bar{\vB} \rangle \\
        =& \langle \delta \sum_k w_k\vmu_k\vmu_k^T + (1-\delta)(\sum_k w_k\vmu_k)(\sum_j w_j \vmu_j)^T,\vR\vB\vR^T \rangle \\
        =& \langle \delta \sum_k w_k\vR^T\vmu_k\vmu_k^T\vR + \\
         & (1-\delta)(\sum_k w_k\vR^T\vmu_k)(\sum_j w_j \vR^T\vmu_j)^T,\vB^* \rangle \\
        =& \langle \delta \sum_k w_k\vmu_k\vmu_k^T + (1-\delta)(\sum_k w_k\vmu_k)(\sum_j w_j \vmu_j)^T,\vB^* \rangle \\
        =& \mathbb{E}_{\vx,\aug{\vx}}[\vx^T\vB^*\aug{\vx}]
    \end{align*}

    Now see analyze the second term in $\mathcal{L}$ where we prove : 

    \begin{align*}
        \mathbb{E}_{\vx} [\log(\mathbb{E}_{\negative{\vx}}[\exp(\vx^T\bar{\vB}\negative{\vx})])] & = \mathbb{E}_{\vx} [\log(\mathbb{E}_{\negative{\vx}}[\exp(\vx^T\vB^*\negative{\vx})])] \\
    \end{align*}

    For this we show that the random variables $\vx^T\vB^*\negative{\vx}$ and $\vx^T\bar{\vB}\negative{\vx}$ have identical distribution. We first simplify $\vx^T\vB^*\negative{\vx}$ below.


    \begin{align*}
        & \vx^T\bar{\vB}\negative{\vx} \\ 
        = & \vx^T \vR \vB^* \vR \negative{\vx} \\ 
        = & (\vx-\vmu_z + \vmu_z)^T \vR \vB^* \vR(\negative{\vx}-\vmu_{\negative{z}} + \vmu_{\negative{z}})\\ 
        = & ((\vx-\vmu_z)^T \vR + \vmu_z^T \vR) \vB^*( \vR(\negative{\vx}-\vmu_{\negative{z}}) +  \vR\vmu_{\negative{z}}) \\ 
        = & ((\vx-\vmu_z)^T \vR + \vmu_z^T) \vB^*( \vR(\negative{\vx}-\vmu_{\negative{z}}) +\vmu_{\negative{z}}) \\
    \end{align*}

    where $\vmu_z^T\vR = \vmu_z^T(\vI-2\vv\vv^T) = \vmu_z^T-2(\vmu_z^T\vv)\vv^T = \vmu_z^T$. Now we prove that their distributions are identical.

    \begin{align*}
        & \mathrm{Pr}(\vx^T\bar{\vB}\negative{\vx} \leq c) \\ 
        = & \mathrm{Pr}(((\vx-\vmu_z)^T\vR + \vmu_z^T) \vB^*(\vR(\negative{\vx}-\vmu_{\negative{z}}) +\vmu_{\negative{z}}) \leq c) \\ 
        = & \mathrm{Pr}(((\vx-\vmu_z)^T + \vmu_z^T) \vB^*((\negative{\vx}-\vmu_{\negative{z}}) +\vmu_{\negative{z}}) \leq c) \\ 
        = & \mathrm{Pr}(\vx\vB^*\negative{\vx} \leq c) 
    \end{align*}

    We use the fact that $(\vx-\vmu_z)^T(\vI-2\vv\vv^T)$ is identically distributed to $(\vx-\vmu_z)$ i.e. $\mathcal{N}(0,\vI)$. 

    Now since $\vx^T\vB^*\negative{\vx},\vx^T\bar{\vB}\negative{\vx}$ are identically distributed, second term in the loss are also identically distributed and have the same expectation i.e.
    \begin{align*}
        \mathbb{E}_{\vx} [\log(\mathbb{E}_{\negative{\vx}}[\exp(\vx^T\vB^*\negative{\vx})])] = \mathbb{E}_{\vx} [\log(\mathbb{E}_{\negative{\vx}}[\exp(\vx^T\bar{\vB}\negative{\vx})])]
    \end{align*} 

    This proves that $\mathcal{L}$ is identical for $\vB^*$ and $\vB'$. But since our loss is strictly convex we have 
    $\mathcal{L}(\frac{\vB^*+\bar{\vB}}{2}) < \frac{1}{2}(\mathcal{L}(\vB^*) + \mathcal{L}(\bar{\vB})) = \mathcal{L}(\vB^*)$. 
    This contradicts the fact that $\vB^*$ is optimal in $\mathbb{S}^d_{+}$.
    Hence proved.
    \end{proof}

\subsection{Column space of A is equal to Fischer Subspace}

We prove that : $\col{\vA^*} = S_F$ when $\delta=1$

\begin{proof}
Consider the eigendecomposition for the optimal solution $\vB^*$ (Eq~\ref{eigendecomp}). Suppose there is a direction $\vv \in \Span{\{\vmu_k\}_k}$ which is not in $\Span{\{\vu_i\}_{i \in \mathcal{I}}}$. 

Without loss of generality assume $\vv^T\vu_i = 0 \ \forall i \in \mathcal{I}$ (if not then project onto the null space and re-normalize). Now since $\lambda_j = 0 \  \forall i \notin \mathcal{I}$, we can rotate the eigenvectors $[\vu_j]_{j \notin \mathcal{I}}$ with a unitary matrix such that $\vu_j= v$ for some $j$. This operation doesn't change $\vB^*$

That implies there exists $\vu_j$ such that $\vu_j \in \Span{\{\vmu_k\}_k}$ and $\lambda_j = 0$. We now show that 
\begin{align*}
    \frac{\partial \mathcal{L}}{\partial \lambda_j}\Bigr\rvert_{\vB = \vB^*} < 0 
\end{align*}
Hence $\lambda_j >0$ for optimal $\vB^*$.

First consider the derivative of first term of $\mathcal{L}$ with $\lambda_j$
\begin{align*}
& \frac{\partial \mathbb{E}_{\vx,\aug{\vx}}[\vx^T\vB\aug{\vx}]}{\partial \lambda_j} = \langle \sum_k w_k\vmu_k\vmu_k^T,\frac{\partial \vB}{\partial \lambda_j} \rangle \\
=& \langle \sum_k w_k\vmu_k\vmu_k^T,\vu_j\vu_j^T \rangle = \sum_k w_k(\vmu_k^T\vu_j)^2 = \sum_k w_k a_k^2
\end{align*}

where $a_k = \vmu_k^T\vu_j$. Since $\vu_j \in \Span{\{\vmu_k\}_k}$ that implies there exists a non-zero $a_k$. Hence we have that $\frac{\partial \mathbb{E}_{\vx,\aug{\vx}}[\vx^T\vB\aug{\vx}]}{\partial \lambda_j}>0$ for any $\vB\neq 0$. 

For the second term in the loss we have : 
\begin{align*}
    & \frac{\partial \mathbb{E}_{\vx} [\log(\mathbb{E}_{\negative{\vx}}[\exp(\vx^T\vB\negative{\vx})])]}{\partial \lambda_j} \\
    = & \mathbb{E}_{\vx} \bigg[\frac{\mathbb{E}_{\negative{\vx}}[\exp(\vx^T\vB\negative{\vx})\frac{\partial \vx^T\vB\negative{\vx}}{\partial \lambda_j}]}{\mathbb{E}_{\negative{\vx}}[\exp(\vx^T\vB\negative{\vx})]}\bigg] \\
    = & \mathbb{E}_{\vx} \bigg[\frac{\mathbb{E}_{\negative{\vx}}[\exp(\vx^T\vB\negative{\vx})(\vx^T\vu_j)(\negative{\vx}^T\vu_j)]}{\mathbb{E}_{\negative{\vx}}[\exp(\vx^T\vB\negative{\vx})]}\bigg] \\
    = & \mathbb{E}_{\vx} \bigg[(\vx^T\vu_j)\frac{\mathbb{E}_{\negative{\vx}}[\exp(\vx^T\vB\negative{\vx})(\negative{\vx}^T\vu_j)]}{\mathbb{E}_{\negative{\vx}}[\exp(\vx^T\vB\negative{\vx})]}\bigg] \\
    = & \mathbb{E}_{\vx} \Bigg[(\vx^T\vu_j)\mathbb{E}_{\negative{\vx}}\bigg[\frac{\exp(\vx^T\vB\negative{\vx})}{\mathbb{E}_{\negative{\vx}}[\exp(\vx^T\vB\negative{\vx})]}\negative{\vx}^T\vu_j\bigg]\Bigg] \\
    = & \mathbb{E}_{\vx} \Bigg[(\vx^T\vu_j)\mathbb{E}_{\negative{\vx}}\bigg[g(\negative{\vx};\vx,\vB)\negative{\vx}^T\vu_j\bigg]\Bigg] 
\end{align*}
where we define $g(\negative{\vx};\vx,\vB)$ as :
\begin{align*}
g(\negative{\vx};\vx,\vB) \triangleq \frac{\exp(\vx^T\vB\negative{\vx})}{\mathbb{E}_{\negative{\vx}}[\exp(\vx^T\vB\negative{\vx})]}
\end{align*}

with $\mathbb{E}_{\negative{\vx}}[g(\negative{\vx};\vx,\vB)] = 1$. Also define $\vx_\perp = \vu_j(\vu_j^T\vx)$ and $\vx_\parallel = (I-\vu_j\vu_j^T)\vx$. Hence $\vx = \vx_\perp + \vx_\parallel$. Notice that since $\lambda_j = 0$ for $\vB^*$, we have 
\begin{align*}
    g(\negative{\vx};\vx,\vB^*) = g(\negative{\vx}_\parallel;\vx,\vB^*) = g(\negative{\vx}_\parallel;\vx_\parallel,\vB^*)
\end{align*}

We evaluate the inner term in the above expression at $\vB = \vB^*$ : 
\begin{align*}
& \mathbb{E}_{\negative{\vx}}\bigg[g(\negative{\vx};\vx,\vB^*)\negative{\vx}^T\vu_j \bigg] \\
=  & \sum_k w_k \mathbb{E}_{\negative{\vx}\sim\mathcal{N}(\vmu_k,I)}\bigg[g(\negative{\vx};\vx,\vB^*)\negative{\vx}^T\vu_j \bigg] \\
= & \sum_k w_k \mathbb{E}_{\negative{\vx}\sim\mathcal{N}(\vmu_k,I)}\bigg[g(\negative{\vx};\vx,\vB^*)\negative{\vx}^T\vu_j \bigg]
\end{align*}
We look at each of the expectations individually 

\begin{align*}
    & \mathbb{E}_{\negative{\vx}\sim\mathcal{N}(\vmu_k,I)}\bigg[g(\negative{\vx};\vx,\vB^*)\negative{\vx}^T\vu_j \bigg] \\
    = & \mathbb{E}_{\negative{\vx}\sim\mathcal{N}(\vmu_k,I)}\bigg[g(\negative{\vx};\vx,\vB^*)(a_k+\negative{\vz}^T\vu_j) \bigg] \\ 
    = & \mathbb{E}_{\negative{\vx}\sim\mathcal{N}(\vmu_k,I)}\bigg[g(\negative{\vx};\vx,\vB^*)a_k \bigg]+ \\
    & \mathbb{E}_{\negative{\vx}\sim\mathcal{N}(\vmu_k,I)}\bigg[g(\negative{\vx};\vx,\vB^*)(\negative{\vz}_\perp^T\vu_j) \bigg] \\ 
    = & a_k \mathbb{E}_{\negative{\vx}\sim\mathcal{N}(\vmu_k,I)}\bigg[g(\negative{\vx};\vx,\vB^*) \bigg] = a_k h_k(\vx;\vB^*) \\
\end{align*}
where $\negative{\vz}$ is the noise in $\negative{\vx}$ and $h_k(\vx;\vB^*) = \mathbb{E}_{\negative{\vx}\sim\mathcal{N}(\vmu_k,I)}\bigg[g(\negative{\vx};\vx,\vB^*) \bigg]$.  We use the property that $\negative{\vz}_\perp$ is independent  of $g(\negative{\vx};\vx,\vB^*) = g(\negative{\vx}_\parallel;\vx,\vB^*)$ to show that its equal to 0.
\begin{align*}
    & \mathbb{E}_{\negative{\vx}\sim\mathcal{N}(\vmu_k,I)}\bigg[g(\negative{\vx}_\parallel;\vx,\vB^*)(\negative{\vz}_\perp^T\vu_j) \bigg] \\
    = & \mathbb{E}_{\negative{\vx}\sim\mathcal{N}(\vmu_k,I)}\bigg[g(\negative{\vx}_\parallel;\vx,\vB^*)\bigg]\mathbb{E}_{\negative{\vx}\sim\mathcal{N}(\vmu_k,I)}\bigg[\negative{\vz}_\perp^T\vu_j \bigg] \\
    = & \mathbb{E}_{\negative{\vx}\sim\mathcal{N}(\vmu_k,I)}\bigg[g(\negative{\vx}_\parallel;\vx,\vB^*)\bigg]*0 = 0 
\end{align*}

Hence we have 
\begin{align*}
& \mathbb{E}_{\negative{\vx}}\bigg[g(\negative{\vx};\vx,\vB^*)\negative{\vx}^T\vu_j \bigg] = \sum_k w_k a_k h_k(\vx;\vB^*)
\end{align*}

We have $\sum_k w_k h_k(\vx) = 1$
\begin{align*}
    & \sum_k w_k h_k(\vx;\vB)  \\
    = & \sum_k w_k \mathbb{E}_{\negative{\vx}\sim\mathcal{N}(\vmu_k,I)}\bigg[g(\negative{\vx}_\parallel;\vx,\vB) \bigg] \\
    = & \sum_k w_k \mathbb{E}_{\negative{\vx}\sim\mathcal{N}(\vmu_k,I)}\bigg[\frac{\exp(\vx^T\vB\negative{\vx})}{\mathbb{E}_{\negative{\vx}}[\exp(\vx^T\vB\negative{\vx})]} \bigg] \\
    = & \mathbb{E}_{\negative{\vx}}\bigg[\frac{\exp(\vx^T\vB\negative{\vx})}{\mathbb{E}_{\negative{\vx}}[\exp(\vx^T\vB\negative{\vx})]} \bigg] = 1 \\
\end{align*}
While $g(\negative{\vx};\vx,\vB)$, is the weight $\vx$ gives to a point $\negative{\vx}$, $h_k(\vx;\vB)$ can be interpreted as a weight $\vx$ gives to cluster $k$  (i.e. expectation of $g$ over a cluster).

Going back to the expression at $\vB = \vB^*$
\begin{align*}
& \mathbb{E}_{\vx} \Bigg[(\vx^T\vu_j)\mathbb{E}_{\negative{\vx}}\bigg[g(\negative{\vx};\vx,\vB^*)\negative{\vx}^T\vu_j\bigg]\Bigg] \\
= & \mathbb{E}_{\vx} \Bigg[(\vx^T\vu_j)\sum_k w_k a_k h_k(\vx;\vB^*)\Bigg] \\
= & \sum_k w_k a_k \mathbb{E}_{\vx} \bigg[(\vx^T\vu_j) h_k(\vx;\vB^*)\bigg] \\
= & \sum_k w_k a_k \mathbb{E}_{\vx} \bigg[(\vx^T\vu_j) h_k(\vx;\vB^*)\bigg] \\
= & \sum_k w_k a_k \sum_{k'} w_{k'} \mathbb{E}_{\vx\sim\mathcal{N}(\vmu_{k'},I)}\bigg[(\vx^T\vu_j) h_k(\vx;\vB^*)\bigg]
\end{align*}

$h$ inherits the property $h_k(\vx;\vB^*) = h_k(\vx_\parallel;\vB^*)$ from $g$ (this is because $\lambda_j = 0$ in $\vB^*$).
Evaluating the inner expression we have 
\begin{align*}
    & \mathbb{E}_{\vx\sim\mathcal{N}(\vmu_{k'},I)}\bigg[(\vx^T\vu_j) h_k(\vx;\vB^*)\bigg] \\
    = & \mathbb{E}_{\vx\sim\mathcal{N}(\vmu_{k'},I)}\bigg[(a_{k'}+\vz_\perp^T\vu_j) h_k(\vx;\vB^*)\bigg] \\
    = & a_{k'}\mathbb{E}_{\vx\sim\mathcal{N}(\vmu_{k'},I)}\bigg[h_k(\vx;\vB^*)\bigg] \\
    + & \mathbb{E}_{\vx\sim\mathcal{N}(\vmu_{k'},I)}\bigg[(\vz_\perp^T\vu_j) h_k(\vx;\vB^*)\bigg] \\
    = & a_{k'}\mathbb{E}_{\vx\sim\mathcal{N}(\vmu_{k'},I)}\bigg[h_k(\vx;\vB^*)\bigg] = a_{k'} f_{k',k} \\
\end{align*}

where $\vz$ is the noise in $\vx$. We use the property that $\vz_\perp$ is independent  of $h_k(\vx;\vB^*) = h_k(\vx_\parallel;\vB^*)$ to show that its equal to 0. 
\begin{align*}
    \sum_{k} w_{k} f_{k',k} = & \sum_{k} w_{k} \mathbb{E}_{\vx\sim\mathcal{N}(\vmu_{k'},I)}\bigg[h_{k}(\vx;\vB^*)\bigg] \\
     & \mathbb{E}_{\vx\sim\mathcal{N}(\vmu_{k'},I)}\bigg[\sum_{k} w_{k} h_{k}(\vx;\vB^*)\bigg] = 1 \\
\end{align*}
since $\sum_k w_k h_k(\vx;\vB^*) = 1$.

Define a matrix $\vF\in \mathbb{R}^{K\times K}$ as $\vF_{i,j} = f_{i,j}$ and $\vW = \mathrm{diag}(w_1,w_2,\ldots,w_{K})$ and $\va = [a_1,a_2,\ldots,a_{K}]^T$.

\begin{align*}
    & \mathbb{E}_{\vx} \Bigg[(\vx^T\vu_j)\mathbb{E}_{\negative{\vx}}\bigg[g(\negative{\vx};\vx,\vB^*)\negative{\vx}^T\vu_j\bigg]\Bigg] \\
    = & \sum_k w_k a_k \sum_{k'} w_{k'} \mathbb{E}_{\vx\sim\mathcal{N}(\vmu_{k'},I)}\bigg[(\vx^T\vu_j) h_k(\vx;\vB^*)\bigg] \\
    = & \sum_k w_k a_k \sum_{k'} w_{k'} a_{k'} f_{k',k} \\
    = & \va^T\vW \vF \vW \va = (\sqrt{\vW}\va)^T \sqrt{\vW}\vF \sqrt{\vW} (\sqrt{\vW}\va)
\end{align*}

Eigenvalues of $\sqrt{\vW}\vF \sqrt{\vW}$ are equal to eigenvalues of $\vF\vW$ (since this is a similarity transform). And if $\vv$ is eigenvector of $\vF\vW$, then $\sqrt{\vW}\vv$ is the eigenvector for $\sqrt{\vW}\vF \sqrt{\vW}$. 

Since $\vF$ has expectation of exponentials as its entries i.e. $\vF_{i,j}>0$ and $\vW>0$ from definition. Hence we have that every entry of $\vF\vW$ is strictly greater than 0. The Perron–Frobenius eigenvalue $r$ is given by 
\begin{align*}
    \underset{i}{\mathrm{min}} \sum_j (\vF\vW)_{i,j} \leq r \leq \underset{i}{\mathrm{max}} \sum_j (\vF\vW)_{i,j}
\end{align*}

But we have that for every $i$ the sum is equal to $1$. 
\begin{align*}
    \sum_j (\vF\vW)_{i,j} = \sum_j \sum_l (\vF_{i,l}\vW_{l,j})  = \sum_j f_{i,j}w_j = 1
\end{align*}

Hence $r = 1$ and the Perron vector is simply $\mathbf{1} = [1,1,\ldots,1] \in \mathbb{R}^K$. Hence we show that eigenvector with largest eigenvalue (i.e. 1) for $\sqrt{\vW}\vF \sqrt{\vW}$ is $\sqrt{\vW}\mathbf{1}$. Hence we have that 

\begin{align*}
    (\sqrt{\vW}\va)^T \sqrt{\vW}\vF \sqrt{\vW} (\sqrt{\vW}\va) \leq  ||\sqrt{\vW}\va||^2 = \sum_k w_k a_k^2
\end{align*}
The equality only holds when $\va \in \Span{\mathbf{1}}$, i.e. all $a_k$ are equal. This is not true (since $\sum_k a_k = 0$ and there exists a nonzero $a_k$). Hence 
\begin{align*}
    \mathbb{E}_{\vx} \Bigg[(\vx^T\vu_j)\mathbb{E}_{\negative{\vx}}\bigg[g(\negative{\vx};\vx,\vB^*)\negative{\vx}^T\vu_j\bigg]\Bigg] < \sum_k w_k a_k^2
\end{align*}

The loss for the whole term is \begin{align*}
\frac{\partial \mathcal{L}}{\partial \lambda_j}\Bigr\rvert_{\vB = \vB^*} = & - \frac{\partial \mathbb{E}_{\vx,\aug{\vx}}[\vx^T\vB\aug{\vx}]}{\partial \lambda_j}\\
& + \frac{\partial \mathbb{E}_{\vx} [\log(\mathbb{E}_{\negative{\vx}}[\exp(\vx^T\vB\negative{\vx})])]}{\partial \lambda_j} \\
 < & -\sum_k w_k a_k^2 + \sum_k w_k a_k^2 = 0 
\end{align*}
Hence proved. 

\end{proof}

\section{Proof for InfoNCE loss}\label{app:proofofmain}
In this section, we use \cref{prop:isotropic_infonce} to prove our main theorem. We recommend the reader to first go through \cref{prop:isotropic_infonce} and it's proof (in \cref{app:proofofprop}) to understand the proof for the main theorem.

\begin{proof}

\begin{align*}
\mathcal{L} = -\mathbb{E}_{\vx,\aug{\vx}}[\vx^T\vB\aug{\vx}] + \mathbb{E}_{\vx} [\log(\mathbb{E}_{\negative{\vx}}[\exp(\vx^T\vB\negative{\vx})])] \\
\end{align*}
where $\vB=\vA\vA^T$, i.e. $\vB$ is positive semi-definite (PSD) matrix of rank $r$. The above step follows from Proposition~\ref{prop:isotropic_infonce}. We relax the rank-constraint on $\vB$. We finally show that rank of $\vB^*$ i.e. the optimal solution $\leq K$ which satisfies the above condition as $K\leq r$.

Now consider a change of variables given by $\vx' = \vSigma^{-\half}\vx$. This implies that $\vx'$ now follows a GMM with means given by $\{\vSigma^{-\half}\vmu_k\}_{k \in [K]}$ and isotropic covariance $\vI$ (as $\mathbb{E}[\vx'\vx'^T] = \mathbb{E}[\vSigma^{-\half}\vx\vx^T\vSigma^{-\half}] = \vSigma^{-\half}\mathbb{E}[\vx\vx^T]\vSigma^{-\half}= \vI$).

The loss be hence be written as : 

\begin{align*}
\mathcal{L} = & -\mathbb{E}_{\vx',\aug{\vx}'}[\vx'^T\vSigma^{\half}\vB\vSigma^{\half}\aug{\vx}'] \\
& + \mathbb{E}_{\vx'} [\log(\mathbb{E}_{\negative{\vx}'}[\exp(\vx'^T\vSigma^{\half}\vB\vSigma^{\half}\negative{\vx}')])] \\
\end{align*}

Now let $\vB' = \vSigma^{\half}\vB\vSigma^{\half}$. $\vB'$ is also a PSD matrix. Hence loss can be written as : 

\begin{align*}
\mathcal{L} = -\mathbb{E}_{\vx',\aug{\vx}'}[\vx'^T\vB'\aug{\vx}'] + \mathbb{E}_{\vx'} [\log(\mathbb{E}_{\negative{\vx}'}[\exp(\vx'^T\vB'\negative{\vx}')])] \\
\end{align*}

Let the optimal $\vB'^*$ be denoted by $\sum_i \lambda_i \vu_i \vu_i^T$ and let $\mathcal{I}$ be set of indices with $\lambda_i>0$ (note $\lambda_i \geq 0 \forall i$). But from Proposition~\ref{prop:isotropic_infonce}, we know that , where $\mathrm{Span}\{\vu_i\}_{i \in \mathcal{I}} = \mathrm{Span}\{\vmu'_k\}_{k \in [K]}$ where $\{\vmu'_k\}$ is the set of means.  

For an invertible $\vSigma$ and $\vmu'_k = \vSigma^{-\half}\vmu_k$, we have

\begin{align*}
    &\mathrm{Span}\{\vu_i\}_{i \in \mathcal{I}} = \mathrm{Span}\{\vmu'_k\} \\
    \implies & \mathrm{Span}\{\vu_i\}_{i \in \mathcal{I}} = \mathrm{Span}\{\vSigma^{-\half}\vmu_k\} \\
    \implies & \mathrm{Span}\{\vSigma^{-\half}\vu_i\}_{i \in \mathcal{I}} = \mathrm{Span}\{\vSigma^{-1}\vmu_k\} 
\end{align*}

Substituting and $\vB^* = \vSigma^{-\half}\vB'^*\vSigma^{-\half}$ we get : 
\begin{align*}
    \vB^*  & = \vSigma^{-\half}(\sum_i \lambda_i \vu_i \vu_i^T)\vSigma^{-\half} \\
     & = \sum_i (\sqrt{\lambda_i}\vSigma^{-\half}\vu_i) (\sqrt{\lambda_i}\vSigma^{-\half}\vu_i)^T \\
\end{align*}


Hence the optimal solution in the original space $\vA^* =\left[ \sqrt{\lambda_i}\vSigma^{-\half}\vu_i\right]_{i\in\mathcal{I}}$. We proved that $\mathrm{Span}\{\vSigma^{-\half}\vu_i\}_{i \in \mathcal{I}} = \mathrm{Span}\{\vSigma^{-1}\vmu_k\}_{k \in [K]}$. This implies that column space of $\col{\vA^*} = \mathrm{Span}\{\vSigma^{-\half}\vu_i\}_{i \in \mathcal{I}} = \mathrm{Span}\{\vSigma^{-1}\vmu_k\}_{k \in [K]}$. Hence proved.


\end{proof}

\section{Proof for SimSiam Loss}\label{app:simsiamproof}
In this section, we use the same methodology as in part of \cref{prop:isotropic_infonce} to prove the theorem. First we use the strict convexity of the loss to show that the solution lies in the fisher subspace. Afterwards, by differentiating the loss function w.r.t. any direction in the fisher subspace, we show that all directions in the subspace should be learnt assuming sufficient capacity. 

\begin{proof}
    We write the objective by computing the expectations as : 
    \begin{align*}
        \mathcal{L}_{SS}(\vA)  & = \underset{(\vx,\aug{\vx}) \sim \hat{F}}{-\ \mathbb{E}}\Bigg[(\vA^T\vx)^T\vA^T\aug{\vx}\Bigg] + \xi \underset{\vx\sim F}{\mathbb{E}} \Bigg[||\vA^T\vx||^2\Bigg] \\
         & = \underset{(\vx,\aug{\vx}) \sim \hat{F}}{-\ \mathbb{E}}\Bigg[ \langle \vA\vA^T,\aug{\vx}\vx^T \rangle\Bigg] + \xi \underset{\vx\sim F}{\mathbb{E}} \Bigg[\langle \vA\vA^T,\vx\vx^T \rangle\Bigg] \\
         & = -\langle \vA\vA^T,\underset{(\vx,\aug{\vx}) \sim \hat{F}}{\mathbb{E}} [ \aug{\vx}\vx^T ] \rangle + \xi \langle \vA\vA^T,\underset{\vx\sim F}{\mathbb{E}} [\vx\vx^T ] \rangle \\
         & = -\langle \vA\vA^T,\delta \underset{k}{\sum} w_k\vmu_k\vmu_k^T \rangle + \xi \langle \vA\vA^T, \underset{k}{\sum} w_k\vmu_k\vmu_k^T + \vSigma \rangle \\
         & = \langle \vB,(-\delta + \xi) \vM \rangle + \xi \langle \vB, \vSigma \rangle \\
         & = \langle \vB,(-\delta + \xi) \vM + \xi \vSigma \rangle\\
         & = \langle \vSigma^{\frac{1}{2}}\vB\vSigma^{\frac{1}{2}},(-\delta + \xi)\vSigma^{-\frac{1}{2}}\vM\vSigma^{-\frac{1}{2}} + \xi \vI \rangle \\
         & = \langle \vB',(-\delta + \xi)\vM' + \xi \vI \rangle \\
    \end{align*}
    where we have $\vB = \vA\vA^T$, $\vM = \underset{k}{\sum} w_k\vmu_k\vmu_k^T$, $\vB' = \vSigma^{\frac{1}{2}}\vB\vSigma^{\frac{1}{2}}$ and $\vM' = \vSigma^{-\frac{1}{2}}\vM\vSigma^{-\frac{1}{2}}$. Note that the loss is linear in $\vB$ (and $\vB'$). Since we restrict ourselves to $||\vA||_2\leq 1$, it is also true that $||\vB||_2 \leq 1$. 

    We now show that the optimal $\vB'$ has column space (and row space, since symmetric) lying in column space (and row space) of $\vM'$. 
    Let $\vB^*$ be the optimal solution. Denote the eigendecomposition of $\vB^*$ as $\vU\vLambda\vU^T$, where $\vLambda \succeq 0$ (as $\vB \succeq 0$) and $\vU$ is a unitary matrix. Equivalently : 
    \begin{align} \label{eigendecomp}
        \vB^* = \sum_i \lambda_i \vu_i \vu_i^T
    \end{align}
    where $\vu_i$ are columns of $\vU$.
    Suppose the condition is not true. Then there exists a unit vector $\vv$, such that $\vM'\vv = 0$  and $\vB'\vv \neq 0$. 

    We construct a new matrix $\bar{\vU}$, whose columns are projection of $\vU$ onto the plane with normal vector $\vv$ given by $\vR = \vI-\vv\vv^T$. The projection matrix is defined such that $\vR\vM' = \vM'$. Define $\bar{\vB}$ from the constructed $\bar{\vU}$.
    \begin{align*}
        \bar{\vU} &= \vR\vU = (\vI-\vv\vv^T)\vU \\
        \bar{\vB} &= \bar{\vU}\vLambda \bar{\vU}^T = \vR\vB^*\vR^T\\
    \end{align*}

    Now through some algebra we see that loss at $\bar{\vB}$ is less than at $\vB^*$ and hence $\vB^*$ can't be optimal. 
    
    \begin{align*}
        & \langle \bar{\vB},(-\delta + \xi)\vM' + \xi \vI \rangle \\
        = & \langle \vB^*,(-\delta + \xi)\vR^T\vM'\vR + \xi \vR^T\vR \rangle \\
        = & \langle \vB^*,(-\delta + \xi)\vM' + \xi \vI + \xi (\vR^T\vR-\vI) \rangle \\
        = & \langle \vB^*,(-\delta + \xi)\vM' + \xi \vI \rangle + \xi \langle \vB^*,(\vR^T\vR-\vI) \rangle \\
    \end{align*}

    Now we show that $\langle \vB^*,(\vR^T\vR-\vI) \rangle  < 0$ and since $\xi >0$, loss at $\bar{\vB}$ is less than at $\vB^*$. The fact is intuitively clear and aa formal proof is as follows : 

    \begin{align*}
        \langle \vB^*,\vR^T\vR \rangle =  \tr(\vR\vB^*\vR^T) = \sum_i \lambda_i ||\vR\vu_i||_2^2 < \sum_i \lambda_i ||\vu_i||_2^2 = \langle \vB^*,\vI \rangle
    \end{align*}

    The strict inequality is due to fact that $\vB'\vv \neq 0$, i.e. there exists a $i$ with $\lambda_i>0$ and $\vv^T\vu_i \neq 0$ (and hence $||\vR\vu_i||_2^2 < ||\vu_i||_2^2$). $||\vR\vu_j||_2^2 \leq ||\vu_j||_2^2$ is true generally because $\vR$ is a projection matrix. 

    Hence we showed that column space of $\vB^*$ (i.e., optimal $\vB'$) lies in column space of $\vM'$. Now we show that it \textbf{spans the whole column space}. Suppose not. Let $\vB^*$ be the optimal solution with decomposition with notation as used above. Then WLOG there exists $\vu_i$ which $\in \Span{\vM'}$ and $\lambda_i = 0$ (if it doesn't exist we can rotate the $\vu_j$'s with 0 eigenvalues so that a $\vu_i$ aligns in the subspace). We take derivative w.r.t. $\lambda_i$ and show that it's negative. 

    \begin{align*}
        \frac{\partial \mathcal{L}_{SS}}{\partial \lambda_i}\Bigr\rvert_{\vB' = \vB^*} = & \frac{\partial \langle \vB',(-\delta + \xi)\vM' + \xi \vI \rangle}{\partial \lambda_i}\\
         = &  \vu_i^T\big((-\delta + \xi)\vM' + \xi \vI\big)\vu_i\\
         = &  (-\delta + \xi)\vu_i^T\vM'\vu_i + \xi\\
    \end{align*}
    if $\xi < \frac{\delta\vv^T\vM'\vv}{1+\vv^T\vM'\vv}$ for all directions $\vv$ in $\Span{\vM'}$ (as $\vu_i \in \Span{\vM'}$), then $\frac{\partial \mathcal{L}_{SS}}{\partial \lambda_i}$ is $< 0$. 
    $\frac{\delta\vv^T\vM'\vv}{1+\vv^T\vM'\vv}$ is monotonic in $\vv^T\vM'\vv$. The minimum value of  $\vv^T\vM'\vv$ is the smallest non-zero eigenvalue of $\vM'$ denoted by $\lambda_{\mathrm{min}}$. Hence if $0<\xi<\frac{\delta\lambda_{\mathrm{min}}}{1+\lambda_{\mathrm{min}}}$ we are good. 

    Now we showed that $\vB^*$ (i.e., optimal $\vB'$) spans the complete column space of $\vM'$. Hence using the facts $\vB' = \vSigma^{\frac{1}{2}}\vB\vSigma^{\frac{1}{2}}$ and $\vB = \vA^T\vA$, we can argue that for the optimal value of $\vA$ denoted by $\vA^*$, $\col{\vA^*} = S_F$.
    \end{proof}
\section{Proof for CLIP Loss}\label{app:clipproof}
In this section, we use the first part of \cref{prop:isotropic_infonce} to prove the theorem, i.e. we use the proof of $\col{\vA^*} \subseteq S_F$. 

\begin{proof}
    We write the objective by substituting the functional form of $f$ as : 

    \begin{align*}
        \mathcal{L} = - & \mathbb{E}_{\vx_t,\vx_v}[\vx_{t}^T\vA_{t}\vA^T_v\vx_v] \\
        + & \mathbb{E}_{\vx_t} [\log(\mathbb{E}_{\negative{\vx}_v}[\exp(\vx_t^T\vA_{t}\vA^T_v\negative{\vx}_v)])] \\
        = - & \mathbb{E}_{\vx_t,\vx_v}[\vx^T_t\vB\vx_v] + \mathbb{E}_{\vx_t} [\log(\mathbb{E}_{\negative{\vx}_v}[\exp(\vx^T_t\vB\negative{\vx}_v)])] \\
    \end{align*}
    where $\vB = \vA_t\vA^T_v$. We can further do a change of variable by $\vx_t' = \vSigma_{t}^{-\half}\vx_t$ and $\vx_v' = \vSigma_{v}^{-\half}\vx_v$. The means are now $\vmu_{t,k}' = \vSigma_{t}^{-\half}\vmu_{t,k}$ and $\vmu_{v,k}' = \vSigma_{v}^{-\half}\vmu_{v,k}$. Define $\vB' = \vSigma_{t}^{\half}\vB\vSigma_{v}^{\half}$. Now $\vx_t'$ and $\vx_v'$ have components with covariance being isotropic. Now we have : 
    \begin{align*}
        \mathcal{L} & = -\mathbb{E}_{\vx_t,\vx_v}[\vx^T_t\vB\vx_v] + \mathbb{E}_{\vx_t} [\log(\mathbb{E}_{\negative{\vx}_v}[\exp(\vx^T_t\vB\negative{\vx}_v)])] \\
        & = -\mathbb{E}_{\vx_t',\vx_v'}[\vx'^T_t\vB'\vx_v'] + \mathbb{E}_{\vx_t'} [\log(\mathbb{E}_{\negative{\vx}_v'}[\exp(\vx'^T_t\vB'\negative{\vx}_v')])]
    \end{align*}

    We argue that optimal $\vB'^*$ has its row space equal to  $\mathrm{Span}\{\vmu_{v,k}'\}$ and its column space equal to $\mathrm{Span}\{\vmu_{t,k}'\}$.
    We present the argument for column space (row space argument follows similarly). 
    
    Suppose there exists a unit vector $\vv$ such that $\vv^T\vmu_{t,k}' = 0$ for all $k \in [K]$, $\vv^T\vB'^* \neq 0$. Then we can define a new matrix as $\bar{\vB}' = \vR\vB'^* = (I-2\vv\vv^T)\vB'^*$, where $\vR = I-2\vv\vv^T$ is a reflection matrix. 
    Following arguments from Proposition~\ref{prop:isotropic_infonce} we can prove that $\vx'^T_t\vB'^*\negative{\vx}_v'$ is identically distributed to $\vx'^T_t\bar{\vB}'\negative{\vx}_v'$. Hence using this we can show that $\mathcal{L}(\vB'^*)=\mathcal{L}(\bar{\vB}')$. 
    Then by the strict convexity of the loss function $\mathcal{L}$ we have that $\mathcal{L}(\frac{\vB'^*+\bar{\vB}'}{2}) < \frac{\mathcal{L}(\vB'^*)+\mathcal{L}(\bar{\vB}')}{2} = \mathcal{L}(\vB'^*)$. Hence $\vB'^*$ can't be optimal. 

    Hence proved.
\end{proof}

\section{Representation collapse in non-contrastive learning}\label{sec:simsiam_eg}

Consider a two component GMM in a $d$ dimensional ambient space. The means of the components lie on the $x$ and $y$ axis (i.e. the first two dimensions), equidistant from the origin. Both components have isotropic covariance. We plot the first two dimensions below in~\cref{fig:fig_simsiam}. 
\begin{figure}[h]
\centering
\includegraphics[width=0.5\textwidth]{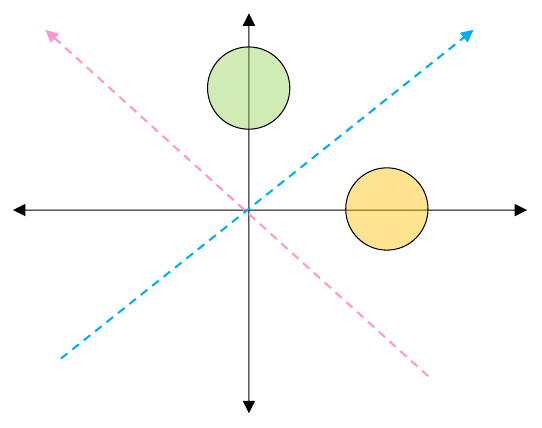}
\caption{The means of the GMM components in the first two dimensions.}
\label{fig:fig_simsiam}
\end{figure}
Note that for this setup the rank of the fisher subspace is two (i.e., $K=2$) and the fisher subspace is the $x-y$ plane. 

Consider learning a mapping matrix onto a one-dimensional subspace (i.e. $r<K$). InfoNCE type contrastive objectives would learn the subspace given by the pink line (i.e. gaussian would be projected onto the pink line and hence would be well seperated). This is stated without proof, but we know that InfoNCE loss would learn a subspace lying in the fisher subspace. And through some basic algebra we can convince ourselves that the solution would be the pink like (i.e. the line $x+y = 0$). 

But for non-contrastive objectives like SimSiam, while we can only prove that the optimal solution lies in the $x-y$ plane. SimSiam objective is not able to distinguish between the lines $x+y=0$ and $x=y$, and hence might lead to collapse of representations. But as stated in the \cref{thm:sharedcov_infonce}, this is only the case if $r<K$. For $r\geq K$, the SimSiam objective learns the complete Fisher subspace (and only the fisher subspace). 
\section{Experimental Details and Additional Results}
\begin{figure*}[t]
\begin{subfigure}{0.3\linewidth}
\includegraphics[width=\linewidth]{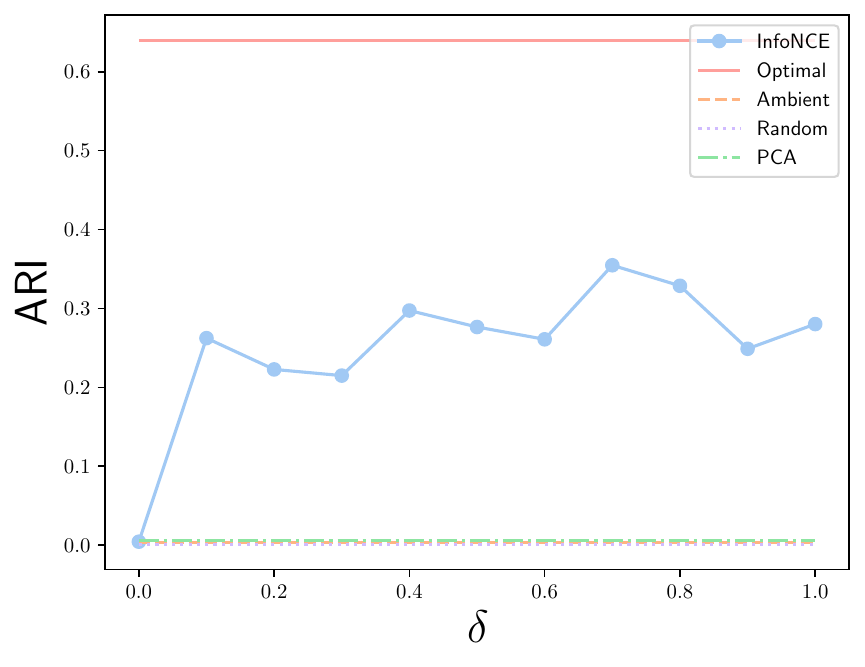}\\
\includegraphics[width=\linewidth]{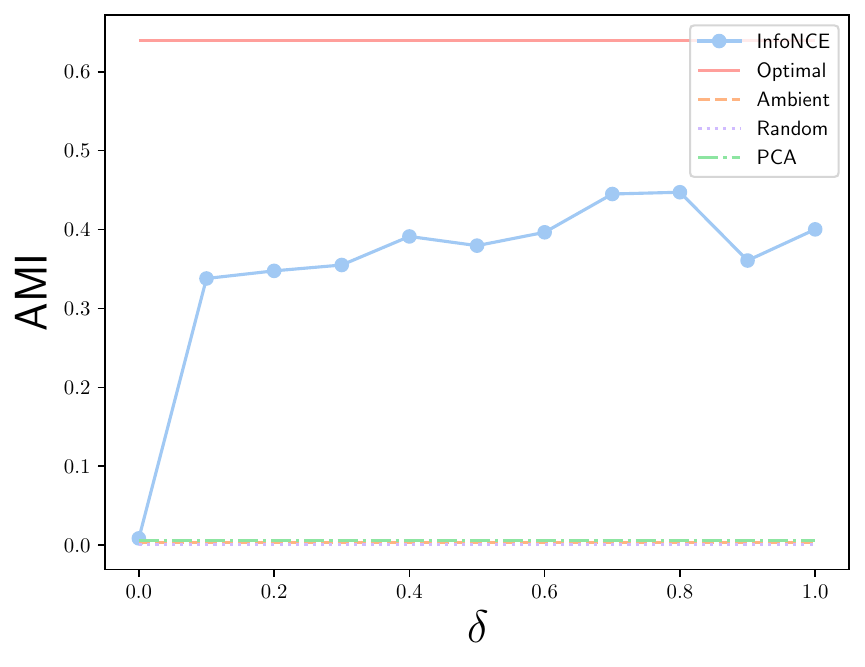}
\caption{Noise in augmentation}
\label{fig:delta_plot}
\end{subfigure}
\begin{subfigure}{0.3\linewidth}
\includegraphics[width=\linewidth]{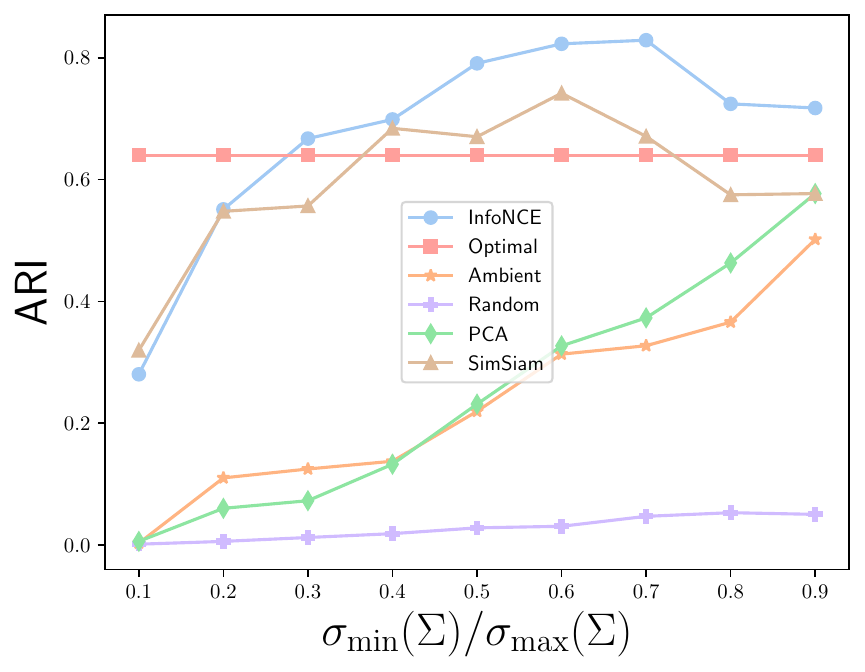}
\includegraphics[width=\linewidth]{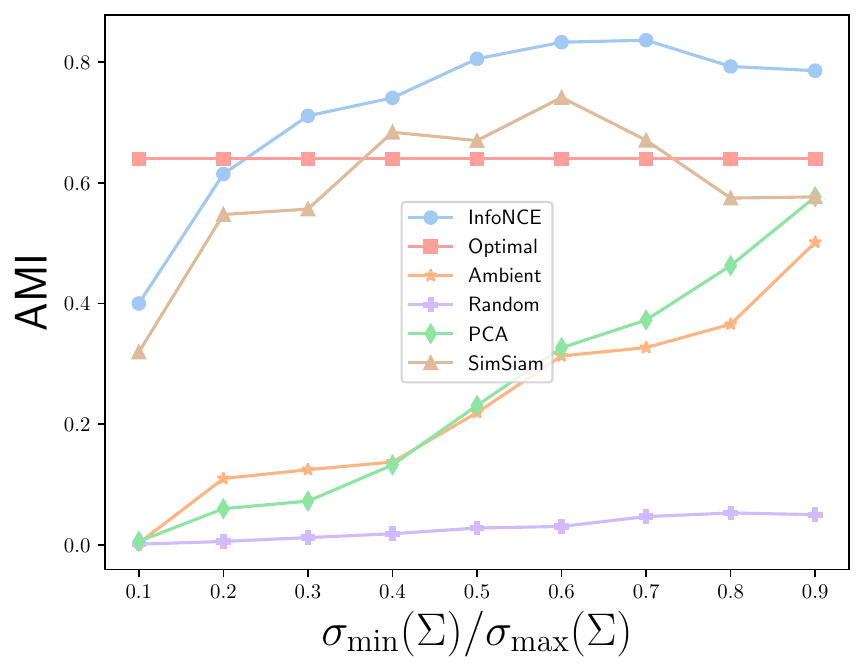}
\caption{Variance orthogonal to $S_F$}
\label{fig:condition_plot}
\end{subfigure}
\begin{subfigure}{0.3\linewidth}
\includegraphics[width=\linewidth]{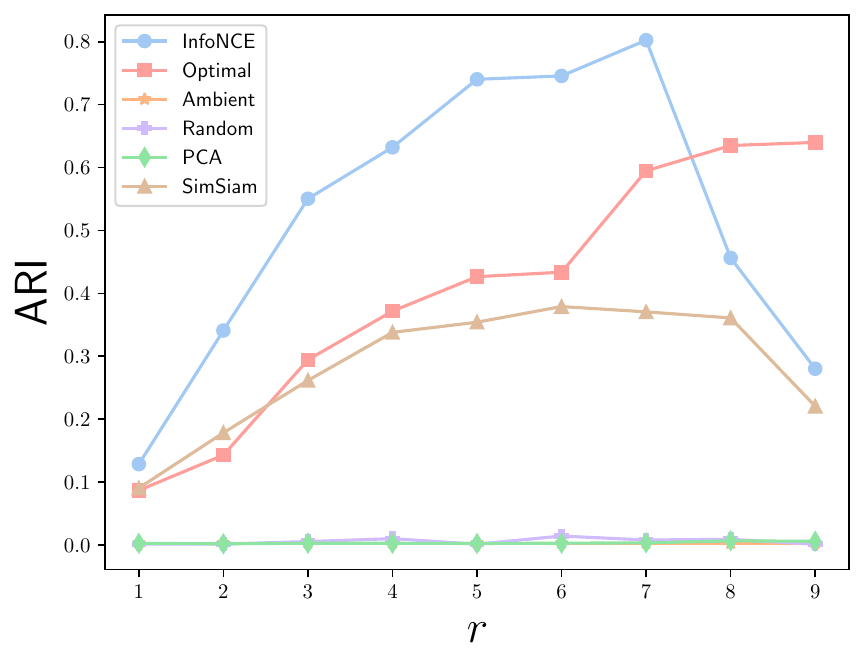}
\includegraphics[width=\linewidth]{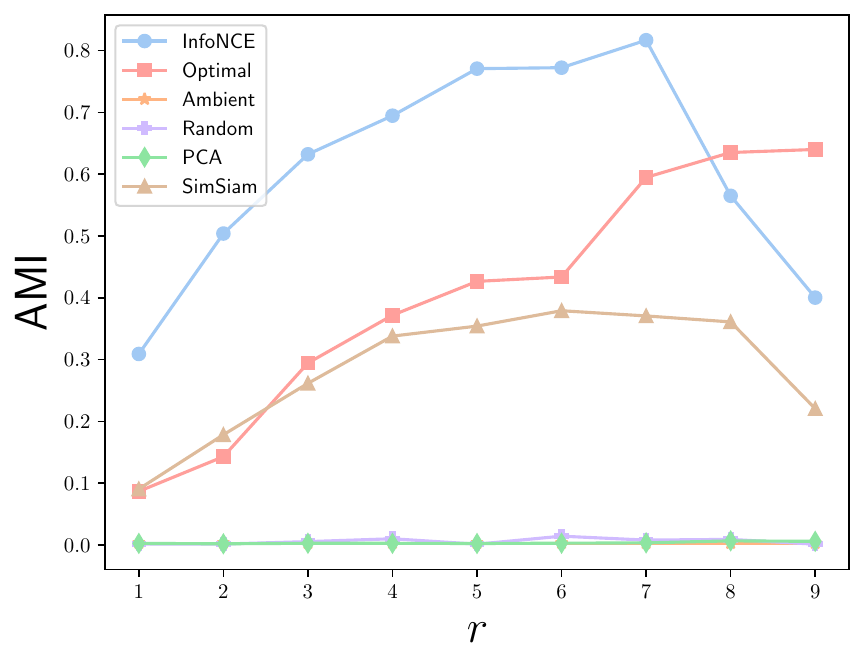}
\caption{Dimension of projected space}
\label{fig:rank_plot}
\end{subfigure}
\caption{\label{fig:mi_plots_ortho} We further extend the results presented in \cref{fig:mi_plots} with orthonormalized InfoNCE and SimSiam mappings}
\end{figure*}

\subsection{Data Generation for Synthetic Experiments} \label{sec:data_generation}

\textbf{Ambient Dimension} We consider the ambient dimension to be 100 for all experiments (except for scaling curves in \cref{fig:scaling_plot}, where we let ambient dimension equal to $K-1$). 

\textbf{Means of Components} For generating means of the components, we first sample $K-1$ times from a 0 mean, $\vI$ normal distribution in ambient dimension space. We chose the $K$th mean such that the means sum upto 0. Hence the means lie in a $K-1=9$ dimensional space.

\textbf{Covariance Matrix} To construct a covariance matrix we start with a unit variance $\vI$ matrix. We then upscale the variance in (Ambient - NumMeans+1) dimensional space orthogonal to the means subspace by factor of $\kappa$, where $\kappa = \sigma_{\mathrm{max}}(\vSigma)/\sigma_{\mathrm{min}}(\vSigma)$. We take $\kappa$ to be $1/0.1 = 10.0$ by default. For scaling experiments, we choose random NumMeans/2 orthogonal directions, and increase the covariance in those directions. Note for condition number experiment we take $\kappa$ to be $1/0.x$ where $x$ goes from 1 to 9. 

\subsection{Orthonormalized InfoNCE and SimSiam}\label{app:ortho_info}
We plot the ARI/AMI numbers for orthonormalized InfoNCE and SimSiam matrices in \cref{fig:mi_plots_ortho}. Specifically, we do a QR decomposition on these matrices and take the orthogonal Q matrix as the projection matrix. In these plots the convergence issues of InfoNCE loss become more apparent, as we can see that InfoNCE lagging behind optimal. 

\subsection{CIFAR-100 Experiments and Additional Results}
In this section, we present a seeded version of our results in the manuscript for the CIFAR-100 experiments. We randomly sample 20 classes from the datasets and collect the data points that belong to these 20 classes. The randomly selected subset of images are then normalized before inputting the the mapping method. Once the methods learn linear maps, it is then applied on the normalized data. Finally, we cluster the (linearly) transformed data using K-Means where the number of classes is the same as the number of components in the GMM. We repeat this procedure for 5 times with different seeds for the random number generator. We report the results in \cref{tab:cifar-appendix}, which are consistent with the results in the main text.

\begin{table}[t]
  \centering
  \caption{\textbf{Clustering performance of linear dimensionality reduction methods on CIFAR100:} \textbf{Bolded} and \underline{underlined} values indicate the best and second-best scores for each column. We report clustering performance for K-Means on $r=5,10,15,19$-dimensional subspaces (mapping dimensions) using the linear mappings learned by 6 different methods. We measure the clustering performance using ARI and AMI. We run the methods for 5 different seeds, corresponding to 5 random 20-class subsets of CIFAR-100 datasets. The average scores over 5 runs are reported with standard deviation given in paranthesis. InfoNCE and LDA outperform the rest by significant margins in all mapping dimensions. LDA shows the best performance in terms of ARI, while InfoNCE achieves higher AMI values across the board.}
  \label{tab:cifar-appendix}
  \resizebox{0.99\textwidth}{!}{%
\begin{tabular}{l|cc|cc|cc|cc}
\toprule
& \multicolumn{2}{c|}{\textbf{Mapping Dim = 5}} 
& \multicolumn{2}{c|}{\textbf{Mapping Dim = 10}} 
& \multicolumn{2}{c|}{\textbf{Mapping Dim = 15}} 
& \multicolumn{2}{c}{\textbf{Mapping Dim = 19}} \\
\cmidrule(lr){2-9}
\textbf{Method} & \textbf{ARI} & \textbf{AMI} 
& \textbf{ARI} & \textbf{AMI} 
& \textbf{ARI} & \textbf{AMI} 
& \textbf{ARI} & \textbf{AMI} \\
\midrule
Random  & 0.020 {\tiny (0.003)} & 0.061 {\tiny (0.009)} & 0.026 {\tiny (0.003)} & 0.078 {\tiny (0.009)} & 0.028 {\tiny (0.003)} & 0.084 {\tiny (0.007)} & 0.033 {\tiny (0.007)} & 0.094 {\tiny (0.015)} \\
Ambient & 0.039 {\tiny (0.006)} & 0.113 {\tiny (0.013)} & 0.039 {\tiny (0.006)} & 0.113 {\tiny (0.013)} & 0.039 {\tiny (0.006)} & 0.113 {\tiny (0.013)} & 0.039 {\tiny (0.006)} & 0.113 {\tiny (0.013)} \\
PCA     & 0.038 {\tiny (0.006)} & 0.110 {\tiny (0.013)} & 0.040 {\tiny (0.006)} & 0.113 {\tiny (0.014)} & 0.040 {\tiny (0.006)} & 0.114 {\tiny (0.014)} & 0.041 {\tiny (0.007)} & 0.116 {\tiny (0.015)} \\
SimSiam & 0.035 {\tiny (0.009)} & 0.111 {\tiny (0.020)} & 0.037 {\tiny (0.009)} & 0.113 {\tiny (0.021)} & 0.035 {\tiny (0.010)} & 0.110 {\tiny (0.021)} & 0.035 {\tiny (0.009)} & 0.110 {\tiny (0.018)} \\
InfoNCE & \underline{0.058} {\tiny (0.009)} & \textbf{0.146} {\tiny (0.016)} & \underline{0.064} {\tiny (0.011)} & \textbf{0.156} {\tiny (0.018)} & \underline{0.063} {\tiny (0.005)} & \textbf{0.157} {\tiny (0.014)} & \underline{0.066} {\tiny (0.008)} & \textbf{0.159} {\tiny (0.014)} \\
LDA     & \textbf{0.060} {\tiny (0.008)} & \underline{0.145} {\tiny (0.014)} & \textbf{0.074} {\tiny (0.012)} & \underline{0.153} {\tiny (0.013)} & \textbf{0.074} {\tiny (0.010)} & \underline{0.148} {\tiny (0.014)} & \textbf{0.076} {\tiny (0.014)} & \underline{0.149} {\tiny (0.017)} \\
\bottomrule
\end{tabular}
}
\end{table}

\section{Linear Discriminant Analysis (LDA) and Fisher LDA} \label{app:lda}

\subsection{Linear Discriminant Analysis}

The classical binary class LDA objective is defined as a bayes-optimal solution for classification under the assumption that the data is generated from a two-component Gaussian mixture model with identical covariances. For non-shared covariance for two component case we don't have a closed form solution and the problem is referred to as Quadratic discriminant Analysis. The LDA solution for two-component GMM with shared covariance is the subspace projection on which leads to no loss in likelihood of data. It is given as : 

\begin{align*}
    \mathcal{S}_{LDA} = \vSigma^{-1}(\vmu_1-\vmu_2)
\end{align*}

\subsection{Fischer Linear Discriminant Analysis}

Instead of defining the LDA objective to be the Bayes-optimal under the assumption that data is generated from a two-component Gaussian mixture model with shared covariance, Fischer LDA considers an alternative objective. It does away with both the GMM and the shared covariance assumption and assumes data to be coming from two different distributions, where each distribution belonged is defined by it's class. Fisher defined the separation between these two distributions to be the ratio of the variance between the classes to the variance within the classes i.e. 
\begin{align*}
    \mathcal{S}_{fischer} = \frac{|\vtheta^T(\vmu_1-\vmu_2)|^2}{|\vtheta^T(w_1 \vSigma_1 + w_2 \vSigma_2)\vtheta|}
\end{align*}

The solution maximizing this is given by $(\vSigma_1 + \vSigma_2)^{-1}(\vmu_1-\vmu_2)$ is termed the Fisher subspace. When the data is generated from a two-component shared covariance GMM, the solution coincides with $\vSigma^{-1}(\vmu_1-\vmu_2)$ learnt by LDA.

\newpage
\section*{NeurIPS Paper Checklist}

\begin{enumerate}

\item {\bf Claims}
    \item[] Question: Do the main claims made in the abstract and introduction accurately reflect the paper's contributions and scope?
    \item[] Answer: \answerYes{} 
    \item[] Justification: We propose a standard theory framework in machine learning, and provide theoretical results that validate our claims. We also conduct experiments to corroborate out theoretical claims.
    \item[] Guidelines:
    \begin{itemize}
        \item The answer NA means that the abstract and introduction do not include the claims made in the paper.
        \item The abstract and/or introduction should clearly state the claims made, including the contributions made in the paper and important assumptions and limitations. A No or NA answer to this question will not be perceived well by the reviewers. 
        \item The claims made should match theoretical and experimental results, and reflect how much the results can be expected to generalize to other settings. 
        \item It is fine to include aspirational goals as motivation as long as it is clear that these goals are not attained by the paper. 
    \end{itemize}

\item {\bf Limitations}
    \item[] Question: Does the paper discuss the limitations of the work performed by the authors?
    \item[] Answer: \answerYes{} 
    \item[] Justification: We have a section for limitations at the end of the paper. We also follow up the theorem statements with discussions on the implications and shortcomings of the results. 
    \item[] Guidelines:
    \begin{itemize}
        \item The answer NA means that the paper has no limitation while the answer No means that the paper has limitations, but those are not discussed in the paper. 
        \item The authors are encouraged to create a separate "Limitations" section in their paper.
        \item The paper should point out any strong assumptions and how robust the results are to violations of these assumptions (e.g., independence assumptions, noiseless settings, model well-specification, asymptotic approximations only holding locally). The authors should reflect on how these assumptions might be violated in practice and what the implications would be.
        \item The authors should reflect on the scope of the claims made, e.g., if the approach was only tested on a few datasets or with a few runs. In general, empirical results often depend on implicit assumptions, which should be articulated.
        \item The authors should reflect on the factors that influence the performance of the approach. For example, a facial recognition algorithm may perform poorly when image resolution is low or images are taken in low lighting. Or a speech-to-text system might not be used reliably to provide closed captions for online lectures because it fails to handle technical jargon.
        \item The authors should discuss the computational efficiency of the proposed algorithms and how they scale with dataset size.
        \item If applicable, the authors should discuss possible limitations of their approach to address problems of privacy and fairness.
        \item While the authors might fear that complete honesty about limitations might be used by reviewers as grounds for rejection, a worse outcome might be that reviewers discover limitations that aren't acknowledged in the paper. The authors should use their best judgment and recognize that individual actions in favor of transparency play an important role in developing norms that preserve the integrity of the community. Reviewers will be specifically instructed to not penalize honesty concerning limitations.
    \end{itemize}

\item {\bf Theory assumptions and proofs}
    \item[] Question: For each theoretical result, does the paper provide the full set of assumptions and a complete (and correct) proof?
    \item[] Answer: \answerYes{} 
    \item[] Justification: We clearly state the assumptions we make for the theorems. In particular, we refer to the data and augmentation distribution assumptions and clearly define the objective functions with their properties. 
    \item[] Guidelines:
    \begin{itemize}
        \item The answer NA means that the paper does not include theoretical results. 
        \item All the theorems, formulas, and proofs in the paper should be numbered and cross-referenced.
        \item All assumptions should be clearly stated or referenced in the statement of any theorems.
        \item The proofs can either appear in the main paper or the supplemental material, but if they appear in the supplemental material, the authors are encouraged to provide a short proof sketch to provide intuition. 
        \item Inversely, any informal proof provided in the core of the paper should be complemented by formal proofs provided in appendix or supplemental material.
        \item Theorems and Lemmas that the proof relies upon should be properly referenced. 
    \end{itemize}

    \item {\bf Experimental result reproducibility}
    \item[] Question: Does the paper fully disclose all the information needed to reproduce the main experimental results of the paper to the extent that it affects the main claims and/or conclusions of the paper (regardless of whether the code and data are provided or not)?
    \item[] Answer: \answerYes{} 
    \item[] Justification: We state the details of our experimental setup. For the synthetic experiments, we give the recipe for data generation and for the real data, we describe how the data is randomly subsampled.
    \item[] Guidelines:
    \begin{itemize}
        \item The answer NA means that the paper does not include experiments.
        \item If the paper includes experiments, a No answer to this question will not be perceived well by the reviewers: Making the paper reproducible is important, regardless of whether the code and data are provided or not.
        \item If the contribution is a dataset and/or model, the authors should describe the steps taken to make their results reproducible or verifiable. 
        \item Depending on the contribution, reproducibility can be accomplished in various ways. For example, if the contribution is a novel architecture, describing the architecture fully might suffice, or if the contribution is a specific model and empirical evaluation, it may be necessary to either make it possible for others to replicate the model with the same dataset, or provide access to the model. In general. releasing code and data is often one good way to accomplish this, but reproducibility can also be provided via detailed instructions for how to replicate the results, access to a hosted model (e.g., in the case of a large language model), releasing of a model checkpoint, or other means that are appropriate to the research performed.
        \item While NeurIPS does not require releasing code, the conference does require all submissions to provide some reasonable avenue for reproducibility, which may depend on the nature of the contribution. For example
        \begin{enumerate}
            \item If the contribution is primarily a new algorithm, the paper should make it clear how to reproduce that algorithm.
            \item If the contribution is primarily a new model architecture, the paper should describe the architecture clearly and fully.
            \item If the contribution is a new model (e.g., a large language model), then there should either be a way to access this model for reproducing the results or a way to reproduce the model (e.g., with an open-source dataset or instructions for how to construct the dataset).
            \item We recognize that reproducibility may be tricky in some cases, in which case authors are welcome to describe the particular way they provide for reproducibility. In the case of closed-source models, it may be that access to the model is limited in some way (e.g., to registered users), but it should be possible for other researchers to have some path to reproducing or verifying the results.
        \end{enumerate}
    \end{itemize}

\item {\bf Open access to data and code}
    \item[] Question: Does the paper provide open access to the data and code, with sufficient instructions to faithfully reproduce the main experimental results, as described in supplemental material?
    \item[] Answer: \answerNo{} 
    \item[] Justification: We are planning to release our code after we prepare a clean and distributable version which is also de-anonymized.
    \item[] Guidelines:
    \begin{itemize}
        \item The answer NA means that paper does not include experiments requiring code.
        \item Please see the NeurIPS code and data submission guidelines (\url{https://nips.cc/public/guides/CodeSubmissionPolicy}) for more details.
        \item While we encourage the release of code and data, we understand that this might not be possible, so “No” is an acceptable answer. Papers cannot be rejected simply for not including code, unless this is central to the contribution (e.g., for a new open-source benchmark).
        \item The instructions should contain the exact command and environment needed to run to reproduce the results. See the NeurIPS code and data submission guidelines (\url{https://nips.cc/public/guides/CodeSubmissionPolicy}) for more details.
        \item The authors should provide instructions on data access and preparation, including how to access the raw data, preprocessed data, intermediate data, and generated data, etc.
        \item The authors should provide scripts to reproduce all experimental results for the new proposed method and baselines. If only a subset of experiments are reproducible, they should state which ones are omitted from the script and why.
        \item At submission time, to preserve anonymity, the authors should release anonymized versions (if applicable).
        \item Providing as much information as possible in supplemental material (appended to the paper) is recommended, but including URLs to data and code is permitted.
    \end{itemize}

\item {\bf Experimental setting/details}
    \item[] Question: Does the paper specify all the training and test details (e.g., data splits, hyperparameters, how they were chosen, type of optimizer, etc.) necessary to understand the results?
    \item[] Answer: \answerYes{} 
    \item[] Justification: We describe the experimental setup in details in the manuscript. We provide details on data generation, split and also properties of the algorithm(s) used.
    \item[] Guidelines:
    \begin{itemize}
        \item The answer NA means that the paper does not include experiments.
        \item The experimental setting should be presented in the core of the paper to a level of detail that is necessary to appreciate the results and make sense of them.
        \item The full details can be provided either with the code, in appendix, or as supplemental material.
    \end{itemize}

\item {\bf Experiment statistical significance}
    \item[] Question: Does the paper report error bars suitably and correctly defined or other appropriate information about the statistical significance of the experiments?
    \item[] Answer: \answerYes{} 
    \item[] Justification: Please see the Appendix for a complete overview of our results with statistical details.
    \item[] Guidelines:
    \begin{itemize}
        \item The answer NA means that the paper does not include experiments.
        \item The authors should answer "Yes" if the results are accompanied by error bars, confidence intervals, or statistical significance tests, at least for the experiments that support the main claims of the paper.
        \item The factors of variability that the error bars are capturing should be clearly stated (for example, train/test split, initialization, random drawing of some parameter, or overall run with given experimental conditions).
        \item The method for calculating the error bars should be explained (closed form formula, call to a library function, bootstrap, etc.)
        \item The assumptions made should be given (e.g., Normally distributed errors).
        \item It should be clear whether the error bar is the standard deviation or the standard error of the mean.
        \item It is OK to report 1-sigma error bars, but one should state it. The authors should preferably report a 2-sigma error bar than state that they have a 96\% CI, if the hypothesis of Normality of errors is not verified.
        \item For asymmetric distributions, the authors should be careful not to show in tables or figures symmetric error bars that would yield results that are out of range (e.g. negative error rates).
        \item If error bars are reported in tables or plots, The authors should explain in the text how they were calculated and reference the corresponding figures or tables in the text.
    \end{itemize}

\item {\bf Experiments compute resources}
    \item[] Question: For each experiment, does the paper provide sufficient information on the computer resources (type of compute workers, memory, time of execution) needed to reproduce the experiments?
    \item[] Answer: \answerYes{} 
    \item[] Justification: We discuss the details related to compute resources in the Appendix.
    \item[] Guidelines:
    \begin{itemize}
        \item The answer NA means that the paper does not include experiments.
        \item The paper should indicate the type of compute workers CPU or GPU, internal cluster, or cloud provider, including relevant memory and storage.
        \item The paper should provide the amount of compute required for each of the individual experimental runs as well as estimate the total compute. 
        \item The paper should disclose whether the full research project required more compute than the experiments reported in the paper (e.g., preliminary or failed experiments that didn't make it into the paper). 
    \end{itemize}
    
\item {\bf Code of ethics}
    \item[] Question: Does the research conducted in the paper conform, in every respect, with the NeurIPS Code of Ethics \url{https://neurips.cc/public/EthicsGuidelines}?
    \item[] Answer: \answerYes{} 
    \item[] Justification: We abide by the NeurIPS Code of Ethics to the best of our abilities.
    \item[] Guidelines:
    \begin{itemize}
        \item The answer NA means that the authors have not reviewed the NeurIPS Code of Ethics.
        \item If the authors answer No, they should explain the special circumstances that require a deviation from the Code of Ethics.
        \item The authors should make sure to preserve anonymity (e.g., if there is a special consideration due to laws or regulations in their jurisdiction).
    \end{itemize}

\item {\bf Broader impacts}
    \item[] Question: Does the paper discuss both potential positive societal impacts and negative societal impacts of the work performed?
    \item[] Answer: \answerNo{} 
    \item[] Justification: We study a fundamental theory problem in machine learning. Our work and results do not have any negative societal impact.
    \item[] Guidelines:
    \begin{itemize}
        \item The answer NA means that there is no societal impact of the work performed.
        \item If the authors answer NA or No, they should explain why their work has no societal impact or why the paper does not address societal impact.
        \item Examples of negative societal impacts include potential malicious or unintended uses (e.g., disinformation, generating fake profiles, surveillance), fairness considerations (e.g., deployment of technologies that could make decisions that unfairly impact specific groups), privacy considerations, and security considerations.
        \item The conference expects that many papers will be foundational research and not tied to particular applications, let alone deployments. However, if there is a direct path to any negative applications, the authors should point it out. For example, it is legitimate to point out that an improvement in the quality of generative models could be used to generate deepfakes for disinformation. On the other hand, it is not needed to point out that a generic algorithm for optimizing neural networks could enable people to train models that generate Deepfakes faster.
        \item The authors should consider possible harms that could arise when the technology is being used as intended and functioning correctly, harms that could arise when the technology is being used as intended but gives incorrect results, and harms following from (intentional or unintentional) misuse of the technology.
        \item If there are negative societal impacts, the authors could also discuss possible mitigation strategies (e.g., gated release of models, providing defenses in addition to attacks, mechanisms for monitoring misuse, mechanisms to monitor how a system learns from feedback over time, improving the efficiency and accessibility of ML).
    \end{itemize}
    
\item {\bf Safeguards}
    \item[] Question: Does the paper describe safeguards that have been put in place for responsible release of data or models that have a high risk for misuse (e.g., pretrained language models, image generators, or scraped datasets)?
    \item[] Answer: \answerNA{} 
    \item[] Justification: Our work does not pose any such risks related to its content; we do not release new models or datasets.
    \item[] Guidelines:
    \begin{itemize}
        \item The answer NA means that the paper poses no such risks.
        \item Released models that have a high risk for misuse or dual-use should be released with necessary safeguards to allow for controlled use of the model, for example by requiring that users adhere to usage guidelines or restrictions to access the model or implementing safety filters. 
        \item Datasets that have been scraped from the Internet could pose safety risks. The authors should describe how they avoided releasing unsafe images.
        \item We recognize that providing effective safeguards is challenging, and many papers do not require this, but we encourage authors to take this into account and make a best faith effort.
    \end{itemize}

\item {\bf Licenses for existing assets}
    \item[] Question: Are the creators or original owners of assets (e.g., code, data, models), used in the paper, properly credited and are the license and terms of use explicitly mentioned and properly respected?
    \item[] Answer: \answerYes{} 
    \item[] Justification: Our theory and numerical experiments do not use any existing assets that requires license. Any academic resource, model, dataset and algorithm used are properly cited.
    \item[] Guidelines:
    \begin{itemize}
        \item The answer NA means that the paper does not use existing assets.
        \item The authors should cite the original paper that produced the code package or dataset.
        \item The authors should state which version of the asset is used and, if possible, include a URL.
        \item The name of the license (e.g., CC-BY 4.0) should be included for each asset.
        \item For scraped data from a particular source (e.g., website), the copyright and terms of service of that source should be provided.
        \item If assets are released, the license, copyright information, and terms of use in the package should be provided. For popular datasets, \url{paperswithcode.com/datasets} has curated licenses for some datasets. Their licensing guide can help determine the license of a dataset.
        \item For existing datasets that are re-packaged, both the original license and the license of the derived asset (if it has changed) should be provided.
        \item If this information is not available online, the authors are encouraged to reach out to the asset's creators.
    \end{itemize}

\item {\bf New assets}
    \item[] Question: Are new assets introduced in the paper well documented and is the documentation provided alongside the assets?
    \item[] Answer: \answerNA{} 
    \item[] Justification: We do not release any assets that requires licensing and/or documentation.
    \item[] Guidelines:
    \begin{itemize}
        \item The answer NA means that the paper does not release new assets.
        \item Researchers should communicate the details of the dataset/code/model as part of their submissions via structured templates. This includes details about training, license, limitations, etc. 
        \item The paper should discuss whether and how consent was obtained from people whose asset is used.
        \item At submission time, remember to anonymize your assets (if applicable). You can either create an anonymized URL or include an anonymized zip file.
    \end{itemize}

\item {\bf Crowdsourcing and research with human subjects}
    \item[] Question: For crowdsourcing experiments and research with human subjects, does the paper include the full text of instructions given to participants and screenshots, if applicable, as well as details about compensation (if any)? 
    \item[] Answer: \answerNA{} 
    \item[] Justification: Our paper does not involve crowdsourcing nor human subjects.
    \item[] Guidelines:
    \begin{itemize}
        \item The answer NA means that the paper does not involve crowdsourcing nor research with human subjects.
        \item Including this information in the supplemental material is fine, but if the main contribution of the paper involves human subjects, then as much detail as possible should be included in the main paper. 
        \item According to the NeurIPS Code of Ethics, workers involved in data collection, curation, or other labor should be paid at least the minimum wage in the country of the data collector. 
    \end{itemize}

\item {\bf Institutional review board (IRB) approvals or equivalent for research with human subjects}
    \item[] Question: Does the paper describe potential risks incurred by study participants, whether such risks were disclosed to the subjects, and whether Institutional Review Board (IRB) approvals (or an equivalent approval/review based on the requirements of your country or institution) were obtained?
    \item[] Answer: \answerNA{} 
    \item[] Justification: The theory-oriented scope of our work does not require such approvals.
    \item[] Guidelines:
    \begin{itemize}
        \item The answer NA means that the paper does not involve crowdsourcing nor research with human subjects.
        \item Depending on the country in which research is conducted, IRB approval (or equivalent) may be required for any human subjects research. If you obtained IRB approval, you should clearly state this in the paper. 
        \item We recognize that the procedures for this may vary significantly between institutions and locations, and we expect authors to adhere to the NeurIPS Code of Ethics and the guidelines for their institution. 
        \item For initial submissions, do not include any information that would break anonymity (if applicable), such as the institution conducting the review.
    \end{itemize}

\item {\bf Declaration of LLM usage}
    \item[] Question: Does the paper describe the usage of LLMs if it is an important, original, or non-standard component of the core methods in this research? Note that if the LLM is used only for writing, editing, or formatting purposes and does not impact the core methodology, scientific rigorousness, or originality of the research, declaration is not required.
    \item[] Answer: \answerNA{} 
    \item[] Justification: The development process does not involve LLM use as any important part of it.
    \item[] Guidelines:
    \begin{itemize}
        \item The answer NA means that the core method development in this research does not involve LLMs as any important, original, or non-standard components.
        \item Please refer to our LLM policy (\url{https://neurips.cc/Conferences/2025/LLM}) for what should or should not be described.
    \end{itemize}

\end{enumerate}


\end{document}